\documentclass[10pt]{article} 
\usepackage[preprint]{tmlr}


\usepackage{amsmath,amsfonts,bm}


















\def\1{\bm{1}}
\newcommand{\train}{\mathcal{D}}
\newcommand{\valid}{\mathcal{D_{\mathrm{valid}}}}
\newcommand{\test}{\mathcal{D_{\mathrm{test}}}}










\DeclareMathAlphabet{\mathsfit}{\encodingdefault}{\sfdefault}{m}{sl}
\SetMathAlphabet{\mathsfit}{bold}{\encodingdefault}{\sfdefault}{bx}{n}











\newcommand{\E}{\mathbb{E}}

\newcommand{\R}{\mathbb{R}}

\newcommand{\KL}{D_{\mathrm{KL}}}



\DeclareMathOperator*{\argmax}{arg\,max}
\DeclareMathOperator*{\argmin}{arg\,min}

\usepackage{hyperref}
\usepackage{url}

\usepackage{amsmath}
\usepackage{amssymb}
\usepackage{mathtools}
\usepackage{amsthm}
\usepackage{dsfont}				
\usepackage{multirow}
\usepackage{subcaption}
\usepackage{graphicx}
\usepackage{booktabs} 
\usepackage[ruled]{algorithm2e}
\usepackage{bm}
\usepackage{physics}
\usepackage[capitalize,noabbrev]{cleveref}

\usepackage{color}

\newcommand{\param}{\theta}       
\newcommand{\dimparam}{d}         
\newcommand{\setparam}{\Theta}    
\newcommand{\T}{T}            
\newcommand{\h}{h}            
\newcommand{\m}{m}            
\newcommand{\ntrain}{n}						 
\newcommand{\Dbatch}{\mathcal{B}}            
\newcommand{\LPD}{\operatorname{LPD}}
\newtheorem{lemma}{Lemma}[section]
\DeclarePairedDelimiterX{\infdivx}[2]{(}{)}{%
  #1\delimsize\|#2%
  }

\title{Temperature Optimization for Bayesian Deep Learning}

\author{\name Kenyon Ng \thanks{Main part of the work
    done while at the University of Melbourne.}
  \email kenyon.ng@monash.edu \\
  \addr Department of Econometrics and Business Statistics \\
  Monash University \AND
  \name Chris van der Heide \email chris.vanderheide@unimelb.edu.au \\
  \addr Department of Electrical and Electronic Engineering\\
  University of Melbourne \AND
  \name Liam Hodgkinson \email lhodgkinson@unimelb.edu.au\\
  \addr School of Mathematics and Statistics \\
  University of Melbourne \AND
  \name Susan Wei \footnotemark[1]
  \email susan.wei@monash.edu\\
  \addr Department of Econometrics and Business Statistics \\
  Monash University}



\begin{document}

\maketitle

\begin{abstract}
  The Cold Posterior Effect (CPE) is a phenomenon in Bayesian Deep Learning
  (BDL), where tempering the posterior to a cold temperature often improves the
  predictive performance of the posterior predictive distribution (PPD).
  Although the term `CPE' suggests colder temperatures are inherently better,
  the BDL community increasingly recognizes that this is not always the case.
  Despite this, there remains no systematic method for finding the optimal
  temperature beyond grid search. In this work, we propose a data-driven
  approach to select the temperature that maximizes test log-predictive density,
  treating the temperature as a model parameter and estimating it directly from
  the data. We empirically demonstrate that our method performs comparably to
  grid search, at a fraction of the cost, across both regression and
  classification tasks. Finally, we highlight the differing perspectives on CPE
  between the BDL and Generalized Bayes communities: while the former primarily
  emphasizes the predictive performance of the PPD, the latter prioritizes the
  utility of the posterior under model misspecification; these distinct
  objectives lead to different temperature preferences.
\end{abstract}

\section{Introduction}
Recent decades have seen substantial advances in machine learning, with
deployments across diverse applications. These range from image recognition and
natural language processing to autonomous vehicles and medical diagnostics.
Driven in no small part by the technological advances that enables large-scale
training of high-fidelity models, these models obtain impressive generalization
performance that are not readily explained with classical statistical wisdom.

One promising avenue that has been explored for explaining and enhancing
performance, as well as providing much-needed robustness guarantees, is
\textbf{Bayesian deep learning} (BDL). However, extending classical Bayesian
techniques to the machine learning setting is a formidable task---the scale of
both the data and model size render na\"ive extensions intractable. Combined with
the fact that neural network models are typically singular \citep{wei22deep},
the mismatch between the size of the data and the number of parameters makes
many of the classical asymptotics inappropriate when applied to problems of
interest. Furthermore, a number of phenomena have been observed when analyzing
deep learning models that were previously considered curiosities or corner-cases
in the classical setting.

A prominent example of this is the so-called \textbf{cold posterior effect}
(CPE). Named in analogy with statistical physics, tempered posteriors are
obtained by raising the posterior density to the power of an artificial
inverse-temperature parameter. The CPE refers to improved generalization
performance of the \textbf{posterior predictive density} (PPD) in both
regression \citep{adlam20cold} and classification \citep{wenzel20how} tasks when
the temperature $\T$ is taken to be cold with $0< \T < 1$. This peculiar effect
is frequently viewed as a hack to improve generalization performance, and has
led to a plethora of works that attempt to explain or `fix' the CPE, see
\cref{sec:bdl-lens} for a literature review.

\paragraph{Contribution.}
There is growing recognition in the BDL community that, despite the term `CPE',
colder temperatures do not always result in better predictive performance for
the PPD \citep{adlam20cold,zhang24cold}. Unfortunately, the common approach of
temperature tuning via grid search is computationally expensive, as it requires
extra posterior sampling for multiple $\T$ across the grid. To address these
issues, we propose a data-driven method to select an appropriate temperature. To
the best of our knowledge, no such dedicated tool exists without appealing to
intermediate approximations (e.g.,~variational inference
\citep{laves21posterior}). Our method only requires maximizing a likelihood
function to find a suitable temperature. This can usually be done as part of the
sampler warm-up phase, and importantly, does not require any extra posterior
sampling.

\section{Tempered Posteriors and the Cold Posterior Effect}
\label{sec:poss-expl-cpe}
Let $\{q(x, y), p(x, y | \param), p(\param)\}$ form a triplet representing the
truth-model-prior, where the data-generating mechanism is
$q(x, y) = q(y | x) q(x)$, the model $p(x, y | \param) = p(y | x, \param) q(x)$
is indexed by $\param \in \setparam \subseteq \R^{\dimparam}$ representing
neural network weights, and the prior on $\theta$ is $p(\theta)$. Here, we focus
on a supervised learning setup with a training dataset
$\train = \{(x_i, y_i)\}_{i=1}^\ntrain$ containing $\ntrain$ observations drawn
from $q(x,y)$. The standard Bayesian update is derived from Bayes' theorem,
resulting in the \textit{standard posterior} distribution,
\begin{equation}
  \label{eq:standard-rule}
  p(\param | \train) \propto  p(\param) \prod_{(x, y) \in \train } p(y | x, \param).
\end{equation}
By introducing a `temperature' parameter, we obtain a family of \textit{tempered
  posteriors} $p_{\beta}(\param | \train)$ by raising the likelihood and prior
to the power of the \textit{inverse temperature} $\beta \coloneq \frac{1}{T}$
for $\beta>0$,
\begin{equation}
  \label{eq:temper-posterior}
  p_{\beta}(\param | \train)
  \propto  p(\param)^{\beta}
  \prod_{(x, y) \in \train} p(y | x, \param)^{\beta}.
\end{equation}
The CPE describes a phenomenon in Bayesian deep learning where the PPD,
\begin{equation}
  \label{eq:smpd}
  p_{\beta}(y | x, \train)
  = \int p(y | x, \param) p_{\beta}( \param| \train ) \, \textrm{d}\param,
\end{equation}
which is constructed from Bayesian model averaging, can achieve better
performance (in terms of the test log predictive density defined below) in
regression and classification by artificially tempering the posterior to
$\beta > 1$ \citep{wenzel20how}. While our construction in
\eqref{eq:temper-posterior} follows the convention of tempering both the
likelihood \textit{and} the prior \citep{wenzel20how, fortuin22bayesian},
improvements in performance have also been observed when only tempering the
likelihood \citep{aitchison21statistical,bachmann22how, kapoor22uncertainty}.

Define the test log predictive density (LPD) of a predictive density
$\hat p(y|x)$ as
\begin{equation*}
  \LPD(\hat p(y|x)) \coloneq \E_{q(x,y)} \log \hat p(y|x),
\end{equation*}
where the hat over $p$ indicates dependence on the training data $\train$, and
the expectation is only taken over the `new' observation $(x,y)$. Note that the
test~LPD is sometimes referred to as the `test log-likelihood' in the
literature, and the negative test LPD is often called `negative log-likelihood
(NLL)'. However, we will avoid these terms as they can sound ambiguous. It can
be shown that a predictive density $\hat p(y|x)$ with a higher test~LPD is
closer to the truth $q(y|x)$ in Kullback-Leibler (KL) divergence
$\E_{q(x)} \KL(q(y | x) \| \hat p(y|x))$. Our primary interest in this
work is to select $\beta$ such that the test LPD of \eqref{eq:smpd} is high.

\subsection{Outline}
Following the reasoning that the aleatoric uncertainty in the data can be
quantified by $\beta$ \citep{adlam20cold,kapoor22uncertainty}, we advocate a
likelihood-based approach to select $\beta$ directly from the data; details are
presented in \cref{sec:how-pick-right}. We empirically show that our method can
select a $\beta$ that is near-optimal in terms of test LPD. Experimental
results are presented in \cref{sec:experiments}.

We then conclude with a detailed discussion to dispel some misconceptions
surrounding the CPE. In particular, we review recent work in BDL in
\cref{sec:bdl-lens}, and explains why colder temperatures do not give better
test LPD. We also address the converse hypothesis sometimes encountered in
Generalized Bayes (GB) --- that warmer temperatures are often better --- in
\cref{sec:model-free-bayesian}. This hypothesis may hold in certain contexts,
e.g.,~ensuring posterior consistency \citep{grunwald12safe}, but does not
usually improve test LPD.

We emphasize that these two communities prioritize different objectives: in BDL,
the primary focus is on the PPD $p(y | x, \train)$ (e.g., reliable uncertainty
quantification of $y$), while in GB, the emphasis lies on the utility of the
posterior $p(\param | \train)$ (e.g., fast concentration rate as
$\ntrain \to \infty$, or credible intervals of $\param$ with good coverage).
Furthermore, the types of statistical models and datasets explored in each
community are notably distinct. In BDL, large-scale neural networks are trained
on extensive, nearly noiseless datasets, whereas in GB, the datasets are
typically smaller, noisier, and the models are classical regular statistical
models. Consequently, it is not surprising that each community arrives at
different recommendations regarding temperature tuning.

\section{Temperature Selection for Test LPD}
\label{sec:how-pick-right}
It is commonly believed that the PPD \eqref{eq:smpd} at $\beta=1$ is optimal for
test LPD when the model is well-specified
\citep{adlam20cold,aitchison21statistical}. However, this is not always the case
(see \cref{sec:div-truth-ppd} for a counter-example). In fact, in the likelihood
tempering case, \citet[Theorem 4]{zhang24cold} shows that $\beta = 1$ can only
be optimal if and only if the training loss,
$\sum_{(x,y)\in \train} \log p(y| x, \train)$ remains unchanged with the
inclusion of new data. We arrived at a similar conclusion for posterior
tempering; see \cref{sec:optimality-test-lpd} for details. This motivates the
development of an efficient method to select the optimal $\beta$.

\subsection{On the Role of Temperature in PPD via Singular Learning Theory}
If we could have some theoretical grasp on how temperature affects the test LPD
of the PPD in~\eqref{eq:smpd}, that is, $\LPD(p_{\beta}(y | x, \train))$, then
it might suggest a methodology for temperature selection. A little known result
to the BDL and GB communities is the following from singular learning theory
\citep[Lemma 3]{watanabe10asymptotic}, which applies to both regular and
singular models $p(y|x,\param)$ \footnote{A model $p(y | x, \param)$ is
  \emph{regular} if the corresponding Fisher information matrix,
  $\E_{q(x,y)}[\nabla_{\param} \log p(y|x, \param)(\nabla_{\param} \log p(y|x, \param))^\top]$,
  is positive-definite. Otherwise, the model is \emph{singular}.}:
\begin{align}
   & -\E_{\train} \LPD(p_{\beta}(y | x, \train)) \nonumber                                                                                      \\
   & \quad = -\E_{\train} \E_{q(x, y)} \log p_{\beta}(y | x, \train) \nonumber                                                                  \\
   & \quad = -\E_{q(x, y)} \log p(y|x, \param_\dag) + \left [\frac{\lambda-\nu(\beta)}{\beta} + \nu(\beta) \right ] \frac{1}{\ntrain} \nonumber \\
   & \qquad + o\left( \frac{1}{\ntrain}\right)
  \label{eq:slt}
\end{align}
where $\param_\dag \in \argmin_{\param} \KL(q(y|x) \| p(y|x,\param))$. Here
$\lambda$ and $\nu$ are strictly positive numbers, respectively known as the
learning coefficient and singular fluctuation. They are invariants of the
underlying truth-model-prior triplet. Note that $\lambda$ is independent of $\beta$
while $\nu$ is a (complex) function of $\beta$. The functional dependence of $\nu$
on $\beta$ is unknown in the current singular learning theory literature.

Some comments on \eqref{eq:slt} are in order. This result allows for both
misspecification ($\param_\dag$ is not necessarily such that the KL divergence
between the truth and $p(y|x,\param_\dag)$ is zero) and singular models
$p(y|x,\param)$, including neural networks, as well as classical models
satisfying standard regularity conditions. On the other hand, it is an
asymptotic result in the sample size $\ntrain$, and hence the prior plays no
role. We also remark that the relation in \eqref{eq:slt} pertains to the
\textit{average} (negative) test LPD of the PPD in \eqref{eq:smpd}, with the
average taken over the training set $\train$, as indicated by the notation
$\E_{\train}$. More precise interpretation of \eqref{eq:slt} can be divided into
two settings:

\paragraph{Regular models.} Let $\dimparam$ be the dimension of $\param$. For
well-specified regular models, $\lambda = \nu(\beta) = \dimparam / 2$ for all
$\beta$ \citep{watanabe09algebraic}, and the second term,
$\left [ \frac{\lambda-\nu(\beta)}{\beta} + \nu(\beta) \right ] \frac{1}{\ntrain}$,
reduces to $\frac{\dimparam}{2\ntrain}$. This implies that for large~$\ntrain$
relative to $\dimparam$, temperature has little impact (though it may appear in
higher-order terms in the expansion). This aligns with recent work by
\citet{mclatchie24predictive}, which arrives at a similar conclusion using
different techniques. Under model misspecification, the second term can
alternatively be expressed as $\frac{\beta}{n} \E_\train V(n)$
\citep{watanabe10equations}, where $V(n)$ represents the functional variance and
now depends on the temperature in a non-trivial manner.

\paragraph{Singular models.} The asymptotic relation in \eqref{eq:slt} also
applies to singular models, e.g.,~neural network models \citep{wei22deep}.
However, the theoretical values of $\lambda$ are generally unknown, except for a
few simple models, such as one-layer tanh or reduced rank regression, in the
well-specification setting
\citep{yamazaki03singularities,aoyagi05stochastic,rusakov05asymptotic,zwiernik11asymptotic}.
Similarly, the singular fluctuation and its temperature dependence are unknown,
even in the well-specification setting.

Since our setup involves modern deep learning models, we are dealing with
singular models, where $\lambda$ and $\nu$ are unknown. While methods to
estimate $\lambda$ and $\nu$ from training data do exist
\citep{lau23quantifying,watanabe10asymptotic}, they require posterior sampling
over neural network weights. In principle, we could use these sample-based
estimates to select the optimal $\beta$, but this would require posterior
sampling at multiple temperatures and is no better than a grid search. As this
approach is computationally challenging for deep learning applications, we are
motivated to propose a data-driven technique for efficiently determining the
optimal temperature, which we detail in the next section.

\subsection{Selecting Temperature Using the Tempered Model}
\label{sec:model-prior-induced}
In this section, we introduce a method to select $\beta$ that corresponds to
high test LPD in both regression and classification tasks. Our approach is
grounded in the insight that the tempered posterior can be reverse-engineered as
the posterior from an alternative model-prior pair
\citep{zeno20why,zhang24cold}. Following these works, we define the
\textit{tempered model},
\begin{equation}
  \label{eq:tempered-model}
  p(y | x, \param, \beta)
  \coloneq \frac{p(y | x, \param)^{\beta}}
  {\int p(y^{\prime} | x, \param)^{\beta} \textrm{d}y^{\prime} } .
\end{equation}
From here, the tempered posterior in~\eqref{eq:temper-posterior} can be equivalently expressed
as
\begin{equation}
  p_{\beta}(\param | \train)
  \propto \tilde p (\param | \beta)
  \prod_{(x, y) \in \train} p(y|x, \param, \beta),
  \label{eq:reformulated-tempered-posterior}
\end{equation}
where the `rest of the terms',
$\tilde p (\param | \beta) \propto p(\param)^{\beta} \prod_{x \in \train} \int p(y^{\prime} | x, \param)^{\beta} \textrm{d}y^{\prime}$,
can be viewed as a prior on $\param$ with the `normalizing constant' being a
function of $x$ in $\train$. The term $\tilde p (\param | \beta)$ can be seen as
an input-dependent prior \citep{zeno20why} but we suppress the dependence on $x$
in the notation. The (inverse) temperature $\beta$ has an intuitive
interpretation of controlling the `spikiness' of the tempered model. We give two
examples with Gaussian (regression) and categorical (classification) models:
\paragraph{Regression.}
Given an arbitrary scalar function $\mu(x; \param)$ and a fixed, known variance
$\sigma^2$, we have a Gaussian model:
$p(y | x, \param) = \mathcal{N}(y | \mu(x; \param), \sigma^2)$. This leads to
\begin{gather*}
  p(y | x, \param, \beta)
  = \mathcal{N}\left(y \left|\, \mu(x; \param), \frac{\sigma^2}{\beta}\right.\right), \\
  \tilde p(\param | \beta)
  \propto p(\param)^{\beta} \left( \dfrac{2\pi \sigma^2}{\beta(2\pi \sigma^2)^{\beta}} \right)^{\ntrain/2} .
\end{gather*}
Therefore, the temperature effectively scales the model and prior variance. For
brevity, we suppress the dependency of $p(y | x, \param, \beta)$ on the
fixed $\sigma^{2}$ in the notation.

\paragraph{Classification.}
In $K$-class classification, we have
$p(y | x, \param) = f_y(x; \param)$ for $ y \in \{1,\ldots, K\}$, where
$f_y(\cdot)$ denotes the $y$-th entry of a softmax output $f$.
This leads to
\begin{gather*}
  p(y | x, \param, \beta)
  = \dfrac{\exp (\beta f_{y}(x; \param) )}{\sum^{K}_{k=1} \exp (\beta f_{k}(x; \param))}, \\
  \tilde p(\param | \beta)
  \propto p(\param)^{\beta} \prod_{x \in \train} \sum_k [f_{k}(x; \param)]^{\beta} .
\end{gather*}
This tempered model is also known as the \textit{tempered softmax}
\citep{hinton15distilling, agarwala23temperature}. For large $\beta$, the
tempered model will concentrate most of the mass in one class, and the converse
will encourage a more uniform distribution of mass across all classes. For
$\beta > 1$, the prior $\tilde p(\param | \beta)$ will also favor~$f$ that
concentrates mass in one class.

As $\beta$ can be used to capture aleatoric uncertainty in the data
\citep{adlam20cold,kapoor22uncertainty}, we propose selecting $\beta$ using a
maximum likelihood estimator for the tempered model in
\eqref{eq:tempered-model}:
\begin{equation}
  \label{eq:temp-mle}
  \hat \param^{*}, \hat \beta^{*}
  \coloneq \argmax_{\param, \beta} \frac{1}{\ntrain} \sum_{(x,y) \in \train}[\log p(y | x, \param, \beta )].
\end{equation}
Standard consistency results imply that, under mild conditions, provided the
model is regular and \emph{some} tempered model is well-specified, choosing
parameters according to \eqref{eq:temp-mle} recovers the optimal model as
$\ntrain \to \infty$ \citep[Chapter 5.2]{vandervaart98asymptotic}. In practice, we
optimize this using SGD, and stop when the log-likelihood shows no further
improvement on a validation set. To ensure that $\beta$ remains positive, we
reparameterize it as $\exp(\log \beta)$ and optimize with respect to
$\log \beta$. Further implementation details are provided in
\cref{sec:algorithm-structure}. We also discussed several variants of our
proposed method in \cref{sec:alternative-strategies}.

A similar temperature optimization approach was proposed in
\citet{guo17calibration} for computing a well-calibrated `plug-in' predictive
density $p(y | x, \param^{*}_{\text{SGD}}, \beta)$, where
$\param^{*}_{\text{SGD}}$ is typically an SGD solution of neural networks from a
standard training workflow. In their method, the optimal $\beta$ is computed
\emph{post hoc} by maximizing
$\frac{1}{\ntrain} \sum_{(x,y) \in \valid} p(y | x, \param^{*}_{\text{SGD}}, \beta)$
on a validation set, with $\param$ fixed and only $\beta$ being optimized. While
the difference appears subtle, we find that jointly optimizing $\param$ and
$\beta$ is the key to a good $\beta$ for constructing PPDs. Detailed
experimental results are provided in \cref{sec:alternative-strategies}.

\subsection{Supporting Theory}
\label{sec:supporting-theory}
Ideally, we would like to theoretically show that the temperature selection
method in \eqref{eq:temp-mle} produces high test LPD of \eqref{eq:smpd}.
However, it turns out that the theoretical guarantees are more natural for a
related object which is suggested by the reformulation of the tempered posterior
in \eqref{eq:reformulated-tempered-posterior}. Namely, we can consider an
alternative PPD that is the expectation of the \textit{tempered model}:
\begin{equation}
  \label{eq:tmpd}
  \E_{p_{\beta}(\param | \train)} p(y | x, \param, \beta)
  = \int p(y | x, \param, \beta) p_{\beta}(\param | \train) \textrm{d}\param.
\end{equation}
This is also the object of study in \citet{adlam20cold} and is to be contrasted
with the PPD in~\eqref{eq:smpd}. We emphasize that our temperature selection
method can be used with either \eqref{eq:smpd} or \eqref{eq:tmpd}, and we will
compare the resulting performance of our method for these two PPDs in the experiments in
\cref{sec:experiments}.

To study the theoretical properties of the temperature selection method for
\eqref{eq:tmpd}, we consider the objective at the population level, leading to
\begin{equation*}
  \param^{*}, \beta^{*} \coloneq \argmax_{\param, \beta} \E_{q(x, y)} \log p(y | x, \param, \beta ).
\end{equation*}
We justify $\beta^*$ in the case of Gaussian linear regression, but our
temperature methodology can be applied in far more general settings as we
demonstrate in \cref{sec:experiments}. We first compute the test LPD constructed
with~\eqref{eq:tmpd} and show that $\beta^{*}$ approximately maximizes a lower
bound of the test LPD.

\begin{lemma}
  \label{thm:linear-lpd}
  Consider a linear regression model
  $p(y | x, \param) = \mathcal{N}(y | x^{\top} \param, \sigma^{2})$ with a
  $d$-dimensional input $x$ and known variance $\sigma^{2}$, and a prior
  $p(\param) = \mathcal{N}(\param | 0, \sigma^{2}_{p})$ with finite variance
  $\sigma^{2}_{p}$. Let
  $\bm{X} \coloneq (x_{1}, \ldots, x_{\ntrain})^{\top} \in \mathbb{R}^{\ntrain \times d}$
  and
  $\bm{\Sigma} \coloneq (\bm{X}^{\top}\bm{X} + \frac{\sigma^{2}}{\sigma^{2}_{p}} \bm{I})^{-1}$.
  The test LPD of the PPD in~\eqref{eq:tmpd} at a fixed $\beta$ is bounded from
  below:
  \begin{align*}
     & \LPD(\E_{p_{\beta}(\param | \train)}[p(y | x, \param, \beta)])          \\
     & \quad > \E_{q(x, y)}  \log p(y | x, \hat{\param}_{\textrm{MAP}}, \beta)
    -\frac{1}{2} \E_{q(x, y)}  \log{(1 + x^{\top} \bm{\Sigma} x)} ,
  \end{align*}
  where $\hat{\param}_{\textrm{MAP}} \coloneq \bm{\Sigma} \bm{X}^{\top} \bm{y}$
  is the \emph{maximum-a-posteriori} solution of the posterior
  $p_{\beta}(\param | \train)$ at $\beta = 1$ and
  $\bm{y} \coloneq (y_{1}, \ldots y_{\ntrain})^{\top} \in \mathbb{R}^{\ntrain}$.
\end{lemma}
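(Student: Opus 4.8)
The plan is to exploit the fact that, with a Gaussian likelihood and Gaussian prior, every density appearing in the statement is itself Gaussian and available in closed form, so the bound collapses to a one-line comparison of two Gaussian log-densities. The first step is to identify the tempered posterior explicitly: raising a Gaussian density to the power $\beta$ rescales its variance by $1/\beta$, so the tempered posterior in \eqref{eq:temper-posterior} is the ordinary conjugate Gaussian posterior for a model with noise variance $\sigma^{2}/\beta$ and prior variance $\sigma_{p}^{2}/\beta$. Its precision is $\frac{\beta}{\sigma^{2}}(\bm{X}^{\top}\bm{X} + \frac{\sigma^{2}}{\sigma_{p}^{2}}\bm{I}) = \frac{\beta}{\sigma^{2}}\bm{\Sigma}^{-1}$, hence the covariance is $\frac{\sigma^{2}}{\beta}\bm{\Sigma}$, and the two factors of $\beta$ cancel in the mean, leaving $p_{\beta}(\param \mid \train) = \mathcal{N}(\param \mid \hat{\param}_{\textrm{MAP}}, \frac{\sigma^{2}}{\beta}\bm{\Sigma})$ with $\hat{\param}_{\textrm{MAP}} = \bm{\Sigma}\bm{X}^{\top}\bm{y}$ independent of $\beta$. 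This is precisely why $\hat{\param}_{\textrm{MAP}}$ is the natural reference point in the statement.

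Next I would evaluate the PPD in \eqref{eq:tmpd}. Since $p(y \mid x, \param, \beta) = \mathcal{N}(y \mid x^{\top}\param, \sigma^{2}/\beta)$ is linear-Gaussian in $\param$ and the tempered posterior is Gaussian, the integral is the standard predictive Gaussian
\[
\E_{p_{\beta}(\param \mid \train)}[p(y \mid x, \param, \beta)] = \mathcal{N}\!\big(y \,\big|\, x^{\top}\hat{\param}_{\textrm{MAP}},\ \tfrac{\sigma^{2}}{\beta}(1 + x^{\top}\bm{\Sigma}x)\big),
\]
i.e.\ it has the same mean as the plug-in density $p(y \mid x, \hat{\param}_{\textrm{MAP}}, \beta)$ but variance inflated by the factor $1 + x^{\top}\bm{\Sigma}x$.

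The final step is a pointwise comparison of the two Gaussian log-densities followed by taking expectations. Writing $m = x^{\top}\hat{\param}_{\textrm{MAP}}$, $v = \sigma^{2}/\beta$, and $a = x^{\top}\bm{\Sigma}x \ge 0$ (nonnegative because $\bm{\Sigma}$ is positive definite), a direct expansion of the Gaussian log-density gives
\[
\log\mathcal{N}(y \mid m, v(1+a)) = \log\mathcal{N}(y \mid m, v) - \tfrac{1}{2}\log(1+a) + \tfrac{(y-m)^{2}}{2v}\cdot\tfrac{a}{1+a}.
\]
Recognising $\log\mathcal{N}(y \mid m, v) = \log p(y \mid x, \hat{\param}_{\textrm{MAP}}, \beta)$, dropping the nonnegative last term, and applying $\E_{q(x,y)}$ to both sides yields exactly the claimed lower bound.

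I do not anticipate a genuine obstacle here; the only point requiring care is the \emph{strictness} of the inequality, for which I would note that the discarded term $\frac{(y-m)^{2}}{2v}\cdot\frac{a}{1+a}$ is strictly positive whenever $x \neq 0$ and $y \neq x^{\top}\hat{\param}_{\textrm{MAP}}$ — an event of positive probability under any non-degenerate data distribution $q(x,y)$ — so its expectation is strictly positive and the bound is strict. The one informative ingredient worth surfacing in the write-up is the variance-inflation identity of the second step, together with the $\beta$-cancellation of the first step that pins the predictive mean at $\hat{\param}_{\textrm{MAP}}$; everything else is routine Gaussian bookkeeping.
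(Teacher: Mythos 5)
Your proof is correct and follows essentially the same route as the paper's: identify the tempered posterior as $\mathcal{N}(\hat{\param}_{\textrm{MAP}}, \tfrac{\sigma^{2}}{\beta}\bm{\Sigma})$, marginalize to obtain the variance-inflated predictive Gaussian $\mathcal{N}(y \mid x^{\top}\hat{\param}_{\textrm{MAP}}, \tfrac{\sigma^{2}}{\beta}(1+x^{\top}\bm{\Sigma}x))$, and discard the nonnegative residual term before taking expectations over $q(x,y)$. Your treatment of strictness (positive probability of $x \neq 0$ and $y \neq x^{\top}\hat{\param}_{\textrm{MAP}}$) is in fact slightly more careful than the paper's, which appeals only to $1 + x^{\top}\bm{\Sigma}x > 1$.
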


The proof can be found in \cref{sec:proof-linear-lpd}. As our goal is to select
$\beta$ that maximizes test LPD, a reasonable strategy is to optimize the lower
bound presented in \cref{thm:linear-lpd} with respect to $\beta$. As the second
term in the lower bound is independent of $\beta$, this is therefore equivalent
to maximizing the first term,
$\E_{q(x, y)} \log p(y | x, \hat{\param}_{\text{MAP}}, \beta)$. However, this
objective function requires $\hat{\param}_{\text{MAP}}$, which is often
unavailable in closed-form. A straightforward solution is to replace it with an
estimate from an iterative optimizer, before optimizing again with respect to
$\beta$. This comes at the cost of two optimization runs (and the effort to tune
their hyperparameters). Instead, we propose maximizing the empirical version of
$\E_{q(x, y)} \log p(y | x, \param, \beta)$ with respect to $\param$ and $\beta$
simultaneously, leading us back to \eqref{eq:temp-mle}. We study the efficacy of
this method empirically in the next section.

\section{Experiments}
\label{sec:experiments}
We now illustrate the behavior of the PPDs, \eqref{eq:smpd} and \eqref{eq:tmpd},
across different values of $\beta$ on a suite of benchmark datasets for both
regression and classification tasks. We refer to \eqref{eq:smpd} as SM-PD and
\eqref{eq:tmpd} as TM-PD, where SM and TM stand for standard model and tempered
model, respectively. Notably, our data-driven procedure for selecting $\beta$ is
agnostic to its downstream use in either SM-PD or TM-PD. We demonstrate that the
proposed method performs well in both in terms of test LPD. While our method
aims to maximize test LPD and is supported by the theory discussed in
\cref{sec:supporting-theory}, we also evaluate the point predictions of the PPD.
Specifically, we use $\hat{y} = \E_{p_{\beta}(y | x, \train)}[y]$ to compute
mean squared error (MSE)
$ \frac{1}{|\test|} \sum_{(x, y) \in \test} (y - \hat{y})^{2}$ for regression,
and $\hat{y} = \arg\max_{y} p_{\beta}(y | x, \train)$ to compute accuracy
$\frac{1}{|\test|} \sum_{(x, y) \in \test} \mathds{1}[y = \hat{y}]$ for
classification. By definition, these point predictions $\hat{y}$ are identical
across SM-PD and TM-PD, and their results are consolidated in our reporting. We
compare our method against a grid search over nine
$\beta \in \{ 0.1, 0.3, 1, 3, \ldots, 1000 \}$, corresponding to increments of
roughly 0.5 on the $\log_{10}$ scale. The optimal $\beta$ from the grid search
is selected based on test LPD, MSE, or accuracy on a validation set\footnote{In
  contrast, our method tracks the log-likelihood of $p(y | x, \param, \beta)$ on
  a validation set}, and the optimal $\beta$ may differ depending on the metric
used.

To illustrate our method, we follow the experimental setup of
\citet{wenzel20how}. Regression tasks are conducted using a one-layer ReLU
network on UCI datasets (Concrete, Energy, Naval), while classification is
performed using a CNN on MNIST and a ResNet20 on CIFAR10, both with and without
data augmentation (DA). For the prior on neural network weights, we restrict
$p(\param)$ to a zero-mean isotropic Gaussian, as this is a common choice for
achieving state-of-the-art performance \citep{izmailov21what}. Further details
on the model and prior can be found in \cref{sec:model-prior}.

We employ the stochastic-gradient Markov chain Monte Carlo (SGMCMC) algorithm
from \citep{wenzel20how} with a cyclical step-size scheduler
\citep{zhang20cyclical} to sample from the posterior (details in
\cref{sec:sgmcmc}). Hyperparameters are carefully tuned based on the temperature
diagnostics in \citet{wenzel20how} to ensure sampler convergence to the
posterior with the specified~$\beta$; see \cref{sec:sgmcmc-diagnostics} and
\cref{sec:hyperparameters} for diagnostics and hyperparameters. This approach
contrasts with the common practice of tuning hyperparameters for predictive
performance. For CIFAR10, we collect 30 samples per run, and 100 for all other
datasets. Each SGMCMC run is repeated five times with different initializations
to generate five sets of posterior samples.

\subsection{Results}
\paragraph{Optimal temperature of SM-PD and TM-PD across models and datasets.}
The test LPD across different temperatures for SM-PD and TM-PD are shown in
\cref{fig:test-tempered-lpd-compare}; see \cref{sec:extra-figures} for
additional figures and \cref{tab:lpd-regression} and
\cref{tab:lpd-classification} for tabular versions of the results. For Concrete,
Energy, Naval and MNIST, we observe that TM-PD generally outperforms SM-PD. This
is perhaps not surprising, as we expect the former to better account for
aleatoric uncertainty. For Concrete and MNIST, the peaks of SM-PD and TM-PD also
coincide. Surprisingly, TM-PD does not outperform SM-PD in the CIFAR10 examples.

In terms of the efficacy of our temperature selection method, we find that it
can generally recover $\beta$ with a good test LPD. An interesting observation
here is that our method tends to select the optimal~$\beta$ for TM-PD in the
regression examples, and optimal $\beta$ for SM-PD in the classification
examples. Moreover, as shown in \cref{fig:test-tempered-lpd-compare}, the
optimal $\beta$ for TM-PD and SM-PD for a given model are usually different.
Therefore, we expect $\hat\beta^{*}$, which is computed irrespective of the
construction of the PPD, may only work for either TM-PD or SM-PD, but not both
at the same time.

We also report MSE (for regression) and accuracy (for classification) to assess
the point predictions of the PPDs. As the point predictions are identical across
SM-PD and TM-PD by definition, we consolidate the results in
\cref{tab:test-results}. In general, we observe that SM-PD and TM-PD outperform
both SGD and the PPD at $\beta=1$ across both metrics. Furthermore, our method
can select temperatures that achieves performance comparable to that of the grid
search.

\begin{figure*}[h]
  \vskip 0.2in
  \begin{center}
    \centerline{\includegraphics[width=\linewidth]{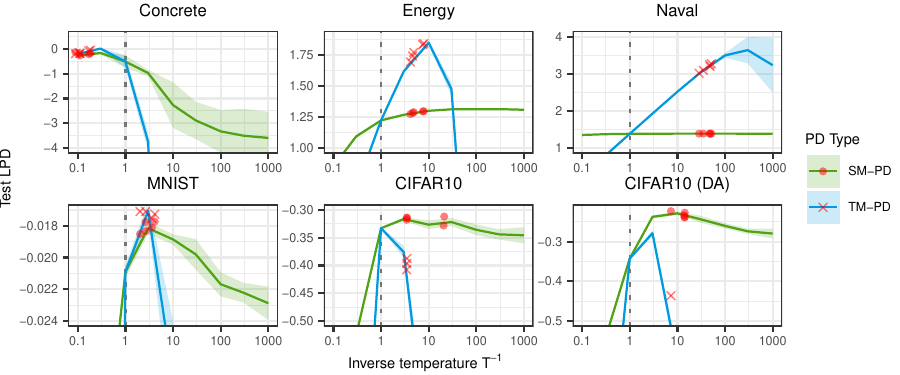}}
    \caption{ Test LPD plotted against inverse temperature $\beta$. We compare
      two types of PPD: SM-PD (green) as defined in \eqref{eq:smpd} and TM-PD
      (blue) as defined in \eqref{eq:tmpd}. Zoomed-in versions of these curves
      are also provided in \cref{fig:test-lpd} and \cref{fig:test-tempered-lpd},
      respectively. In each example, we have five evaluations of
      $\hat \beta^{*}$ from our method. Each of these $\hat \beta^{*}$ has a
      corresponding test LPD computed with SM-PD (red circle) and TM-PD (red
      cross). Some of the red crosses in the CIFAR-10 examples are out of range.
      Solid lines and shaded areas represent the mean $\pm$ standard error
      across five repetitions. The vertical dotted lines indicate the PPD at
      $\beta = 1$. Higher test LPD is better. }
    \label{fig:test-tempered-lpd-compare}
  \end{center}
  \vskip -0.2in
\end{figure*}

\begin{table*}[t]
  \caption{ MSE (for regression) and accuracy (for classification) of the point
    predictions of the PPDs at $\beta = 1$, $\beta = \hat \beta^{*}$ (our
    method), and the optimal $\beta$ obtained from grid search. We also include
    the predictions from SGD as a baseline. The results for SM-PD and TM-PD are
    consolidated, as both produce identical point predictions by definition. The
    presented values are means $\pm$ standard error across five repetitions,
    with the best value among the four methods boldfaced.}
  \label{tab:test-results}
  \vskip 0.15in
  \begin{center}
    \begin{small}
      \begin{sc}
        \begin{tabular}{lcccccc}
          \toprule
          \multirow{2}{*}{Method}  & \multicolumn{3}{c}{MSE $\downarrow$ ($\times 10^{-3}$) } & \multicolumn{3}{c}{Accuracy $\uparrow$}                                                                                                              \\
          \cmidrule(lr){2-4}           \cmidrule(lr){5-7}
                                   & Concrete                                                 & Energy                                  & Naval                      & MNIST                     & CIFAR10                 & CIFAR10 (DA)            \\
          \midrule
          SGD                      & $106 \pm 20$                                             & $1.7 \pm 0.2$                           & $0.149 \pm 0.063$          & $99.05 \pm 0.11$          & $84.9 \pm 3.4$          & $91.9 \pm 0.3$          \\
          $\beta = 1$              & $\mathbf{75 \pm 7}$                                      & $2.1 \pm 0.1$                           & $0.045 \pm 0.005$          & $99.28 \pm 0.04$          & $89.1 \pm 0.3$          & $88.4 \pm 0.2$          \\
          $\beta = \hat \beta^{*}$ & $82 \pm 4$                                               & $1.6 \pm 0.1$                           & $0.032 \pm 0.009$          & $\mathbf{99.38 \pm 0.03}$ & $\mathbf{89.9 \pm 0.2}$ & $\mathbf{92.8 \pm 0.2}$ \\
          Grid                     & $76 \pm 8$                                               & $\mathbf{1.4 \pm 0.1}$                  & $\mathbf{0.027 \pm 0.008}$ & $99.32 \pm 0.05$          & $\mathbf{89.9 \pm 0.2}$ & $\mathbf{92.8 \pm 0.4}$ \\
          \bottomrule
        \end{tabular}
      \end{sc}
    \end{small}
  \end{center}
  \vskip -0.1in
\end{table*}

\paragraph{Optimal temperature across data augmentation strength.}
It has been frequently observed that data augmentation is one of the key ingredients
for observing CPE, and the optimal temperature may depend on the strength of
data augmentation \citep{bachmann22how}. In view of this, we conducted an
experiment to determine how the optimal temperature differs across different
levels of augmentation. Here we only focus on SM-PD and present the
results in \cref{fig:test-lpd-da}. We observe a subtle and gradual increase in
the optimal temperature with the strength of data augmentation. Moreover, our
method can also recover temperatures that produce good test LPD and accuracy
under different augmentation levels. Note that, in addition to the well-known
CPE observed in CIFAR-10 with data augmentation, it also manifests in a milder
form in our CIFAR-10 experiments without data augmentation. This contrasts with
previous studies reporting the absence of CPE in the same setup
\citep{izmailov21what}.

\paragraph{Computation time}
Our approach requires a single SGD run to compute the optimal $\beta$ and a
SGMCMC run to generate the PPD, whereas the grid search requires 9 SGMCMC runs
--- one for each temperature in the grid. The wall-clock times for the main
experiment are reported in \cref{tab:time} of \cref{sec:extra-figures}. Overall,
our method is 4 times faster than the grid search for the smaller regression
models and 8 times faster for the larger classification models.

\begin{figure*}[t]
  \vskip 0.2in
  \begin{center}
    \centerline{\includegraphics[width=\linewidth]{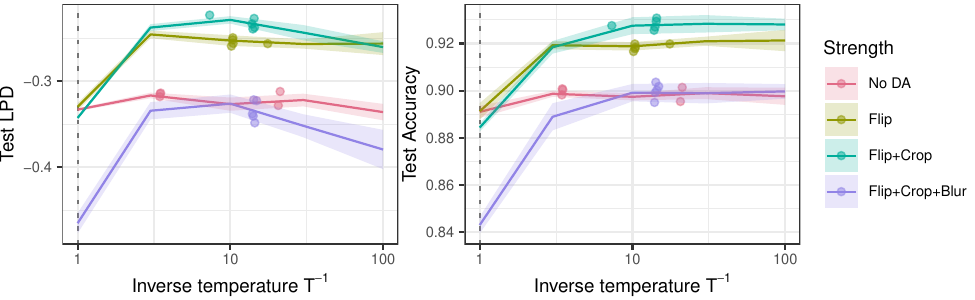}}
    \caption{ Test LPD and accuracy of CIFAR-10 plotted against inverse
      temperature $\beta$ under various levels of data augmentation (color). The
      lines and shaded areas represent the mean $\pm$ standard error across five
      repetitions. There are five dots for each colored curve, and each of these
      dots corresponds to a repetition of $\hat \beta^{*}$ from our method. The
      vertical dotted lines indicate the PPD at $\beta = 1$. There is a subtle
      shift of peaks from left to right as the augmentation strength increases.
      Higher test LPD and accuracy indicate better performance. }
    \label{fig:test-lpd-da}
  \end{center}
  \vskip -0.2in
\end{figure*}

\section{Related Work: CPE through the Lens of Bayesian Deep Learning}
\label{sec:bdl-lens}
Since the publication of \citet{wenzel20how}, there have been numerous attempts
to explain CPE, with particular focus on predictive metrics such as test LPD,
accuracy (for classification), and MSE (for regression). In addition to the
arguments presented in \cref{sec:how-pick-right}, we provide a summary of the
popular insights into CPE from the BDL literature here.

\paragraph{Poor posterior approximation.}
One explanation for the presence or absence of CPE is inadequate posterior
approximation, e.g.,~an inappropriate choice of step size, or the omission of
Metropolis-Hastings steps in the SGMCMC sampler. However, as shown in
\cref{sec:div-truth-ppd,sec:optimality-test-lpd}, CPE is not solely an artifact
of poor posterior approximation and can be observed theoretically.

\paragraph{Likelihood corruption due to data augmentation.}
It has been widely observed that the use of data augmentation often amplifies
CPE \citep{izmailov21what}, while being less pronounced when data augmentation
is turned off. It is argued that the augmented data violate the usual
independent and identically distributed (i.i.d.)~assumption imposed on the
data. However, CPE has been found to persist even after accounting for this
assumption violation \citep{nabarro22data}. Therefore, CPE is unlikely to be a
mere artifact of data augmentation. In a separate analysis,
\citet{kapoor22uncertainty} argues that the SGMCMC sampler will converge to a
tempered posterior in the presence of data augmentation. They concluded that the
likelihood is implicitly raised to a power equal to the number of augmentations,
and raising the likelihood to a power reciprocal of this number should
approximately recover the standard posterior. However, they do not find this
adjustment alone sufficient to remove CPE completely.

\paragraph{Model misspecification.}
It has been argued that data augmentation and curation may lead to model
misspecification, in particular overestimating the aleatoric uncertainty in the
data \citep{aitchison21statistical,kapoor22uncertainty}. Therefore, tempering is
proposed to be an effective tool for correction. This finding aligns with
\citet{bachmann22how}, which showed the optimal temperature dependent on both
the aleatoric uncertainty in the data and the `invariance' of the model to
augmented data.

\paragraph{Prior misspecification.}
While the exact interpretation of a `well-specified prior' is debatable,
\citet{fortuin22bayesian} conducted a large-scale experiment to study the effect
of tempering under Gaussian (isotropic or with correlated covariance) and
heavy-tailed priors (Student's-t or Laplace) on four metrics: test LPD,
accuracy, expected calibration error \citep{naeini15obtaining}, and
out-of-distribution detection accuracy. They observed that $\beta = 1$ is indeed
optimal for test LPD and accuracy when using a heavy-tailed prior. However, the
standard posteriors derived from heavy-tailed priors also tend to underperform
compared to tempered posteriors derived from Gaussian priors. Notably, warmer
temperatures tend to lower expected calibration error, and there is no general
trend in optimal $\beta$ for out-of-distribution detection accuracy. Therefore,
their work suggests that the optimal $\beta$ is not only dependent on the
truth-model-prior triplet but also on the evaluation metric.

\paragraph{Attempts to `fix' the CPE.} Within much of the CPE literature,
tempering is often seen as a `hack' that strays from the Bayesian principle.
This has prompted the development of various `fixes' --- models and priors that
induce standard posteriors with similar predictive performance to tempered
posteriors \citep[see,
  e.g.,][]{fortuin22bayesian,kapoor22uncertainty,marek24can}. However, we argue
that this is unnecessary, as tempered posteriors of arbitrary $\beta > 0$ are
special cases of the generalized Bayes posterior \citep{bissiri16general}; see
\cref{sec:bissiri-primer} for a primer. Therefore, posterior tempering does not
deviate from the Bayesian principle.

\section{Related Work: CPE through the Lens of Generalized Bayes}
\label{sec:model-free-bayesian}
In GB, it is common to tune $\beta$ to improve the utility of the tempered
posterior \citep{zhang06epsilon,grunwald12safe,bissiri16general}. Specifically,
the likelihood is frequently (but not always) tempered to a warmer temperature,
which contrasts with the cold temperature used in BDL. In this section, we
review some recent developments in GB to clarify the role of tempering in GB and
this apparent contradiction.

\paragraph{Ensuring posterior concentration on the KL minimizer.} Posterior
concentration on the KL minimizer
$\param_\dag = \argmin_{\param} \KL(q(y|x) | p(y|x,\param))$\footnote{In GB, the
  minimizer is assumed to be unique, though this is often not true for neural
  network models.} as $\ntrain \to \infty$ is often a desired property for many statistical
applications. Proofs establishing this property
\citep{barron91minimum,zhang06epsilon,grunwald07minimum} typically assume that
the following inequality holds for all $\param$ in the parameter space:
\begin{equation*}
  \E_{q(x,y)} \left[ \left(\frac{p(y | x, \param)}{p(y | x, \param_\dag)} \right)^{\beta} \right] \leq 1,
  \quad \text{for all } \param \in \setparam.
\end{equation*}
This inequality holds at $\beta = 1$ for well-specified models, where
\( q(x, y) = p(y | x, \param_\dag) q(x) \), but the same cannot be said for
misspecified models in general. \citet{grunwald12safe} shows that this
inequality can still hold for many misspecified models for some warm
temperatures $\beta \leq \beta_{\text{critical}} < 1$, and proposed the
SafeBayes algorithm to determine $\beta_{\text{critical}}$. Therefore, SafeBayes
is presented as a tool to achieve posterior concentration on $\param_\dag$.

However, posterior concentration is not always desirable in the context of
maximizing test LPD under model misspecification. By taking
$\beta \leq \beta_{\text{critical}}$, the corresponding PPD will concentrate on
the `plug-in' predictive density $p(y | x, \param_\dag)$ as
$\ntrain \to \infty$. This may be undesirable, since $p(y | x, \param_\dag)$ may
not exhibit good test LPD. For example, when the misspecified model is
non-convex, the PPD can lie outside the model family and be closer to the truth
than $p(y | x, \param_{\dag})$. Therefore, avoiding posterior concentration in
this situation can actually result in a PPD with a higher test LPD.

\paragraph{Calibrating credible and prediction intervals.}
Constructing well-calibrated credible and prediction intervals with the nominal
frequentist coverage probability is known to be challenging, as obtaining the
posterior variance of a misspecified model is difficult \citep[Example
  2.1]{kleijn12bernsteinvonmises}. These issues can be mitigated by appropriately
tuning $\beta$, which affects the spread of the posterior. To this end,
\citet{syring19calibrating} and \citet{wu21calibrating} developed algorithms to
select $\beta$ for calibrating credible and prediction intervals, respectively.

\paragraph{Calibrating prior-to-posterior information gain.}
In decision-theoretic GB \citep[see~\cref{sec:bissiri-primer} for a
  primer]{bissiri16general}, tempered posteriors (with likelihood tempering only)
are seen as an update rule that combines data information with prior belief.
From this perspective, $\beta$ can be interpreted as a `learning rate' at which
information is `transferred' to the posterior. Hence, it is reasonable to
calibrate the information gain at each update. Temperature selection algorithms
that follow this approach include \citet{holmes17assigning} and
\citet{lyddon19general}.

\paragraph{Improving test LPD.}
Concurrent to our work, \citet{mclatchie24predictive} analyzed the role of
$\beta$\footnote{In \citet{mclatchie24predictive}, the \textit{learning rate}
  and, confusingly, \textit{temperature} refer to $\beta$ in this work.} in
improving test LPD theoretically. They concluded that, in the moderately large
$\ntrain$ regime and assuming posterior concentration at $\param_{\dag}$, the
test LPD shows little improvement once $\beta$ becomes sufficiently large.
Although their motivation aligns with ours, their theoretical results do not
extend to neural networks or over-parameterized models (e.g., ResNet20 on
CIFAR10).

\subsection{Limitations of Generalized Bayes for Analyzing BDL Models}
The existing theory in GB, though elegant, is limited to regular and
under-parameterized models and cannot fully account for CPE observed in BDL.
Moreover, the metrics emphasized by GB often differ from those prioritized by
BDL practitioners, such as test LPD. As a result, GB works do not provide a
prescriptive methodology for selecting an appropriate temperature in BDL.
Additionally, many current temperature selection algorithms
\citep{grunwald12safe,syring19calibrating,wu21calibrating} require repeated
posterior computations, making them impractical for modern deep learning models.
Finally, the GB literature primarily considers posteriors with likelihood
tempering only, i.e.,~leaving the prior without tempering. However, since many
priors $p(\param)$ encountered in BDL are proper and bounded, prior tempering
can be regarded as rescaling the prior \citep[Section C.2]{kapoor22uncertainty}.
Therefore, many of the arguments in GB still apply.

\section{Conclusion}
\label{sec:conclusion}
In this work, we proposed a data-driven approach for selecting a good $\beta$
for use in either of the PPDs \eqref{eq:smpd} and \eqref{eq:tmpd}. Our approach
circumvents the costly grid search method, i.e.,~sampling posterior at each
$\beta$ across the grid, by optimizing a likelihood function \eqref{eq:temp-mle}
to obtain a good $\beta$. The $\hat \beta^{*}$ obtained via our method was shown
to achieve comparable test LPD to that obtained from a grid search, all without
performing any extra posterior sampling.

Additionally, we presented a detailed discussion to address the seeming
contradiction in the optimal~$\beta$ recommendations from the BDL and GB
communities. We concluded that the optimal $\beta$ can differ depending on the
specific downstream task.

\paragraph{Limitation.} Our method is subject to several limitations that require
future attention. Firstly, a poorly-tuned optimization procedure may result in a
poor estimation of $\hat \beta^{*}$. Secondly, our method has only been
empirically tested with the ubiquitous Gaussian priors. Lastly, while our method
is empirically effective for modern neural network models and theoretically
justified for Gaussian linear regression, a formal guarantee for general neural
network models is still lacking.

\paragraph{Future directions.}
We hope that this work will inspire further research on data-driven approaches
to select $\beta$, aimed not only at minimizing test~LPD but also at optimizing
other metrics, such as expected calibration error.

\section{Reproducibility Statement}
The experiments details are summarized in the introduction of
\cref{sec:experiments}. Proof of \cref{thm:linear-lpd} is given
in~\cref{sec:proof-linear-lpd}. SGMCMC is described in \cref{sec:sgmcmc}.
Diagnostic check of the SGMCMC is given in \cref{sec:sgmcmc-diagnostics}.
Implementation details of our proposed method is given in
\cref{sec:algorithm-structure}. The set of hyperparameters to reproduce the
experiments is given in \cref{sec:hyperparameters}. Computational environment
and packages are given in \cref{sec:comp-envir}. Evaluation metrics are defined
in~\cref{sec:evaluation-metrics}. Model, prior and datasets are described in
\cref{sec:model-prior}. Source code is published on
\url{https://github.com/weiyaw/tempered-posteriors}.

\section{Acknowledgment}
We would like to thank David Frazier, Dino Sejdinovic and the anonymous
reviewers for their helpful discussions. This research was supported by Cloud
TPUs from Google’s TPU Research Cloud. K.N. is supported by the Australian
Government’s Research Training Program and a joint top-up scholarship from the
Statistical Society of Australia and the Australian Bureau of Statistics. L.H.
is supported by the Australian Research Council through a Discovery Early Career
Researcher Award (DE240100144).

\bibliography{main}
\bibliographystyle{tmlr}

\appendix
\section{Optimal Temperature of a Gaussian Toy Model}
\label{sec:div-truth-ppd}
In this section, we compute the optimal temperature that minimizes the
2-Wasserstein distance and the KL divergence between the truth and PPD of a toy
Gaussian model. We show that: 1) the optimal temperature depends on both the
evaluation metric and the truth-model-prior triplet, and 2) the optimal
temperature that maximizes test LPD is not necessarily~1, even in the case of a
well-specified model.

In our setup, we assume a set of i.i.d.~samples
$\train = \{x_{i}\}_{i = 1}^{\ntrain}$ drawn from a univariate Gaussian truth
$q(x) = \mathcal{N}(x | 0, \tau^2)$ with a known variance $\tau^{2}$. Our model
is given by $p(x | \mu) = \mathcal{N}(x | \mu, \sigma^2)$ with a fixed variance
$\sigma^{2}$. We also assume a Gaussian prior on $\mu$,
i.e.,~$p(\mu) = \mathcal{N}(\mu | 0, \sigma^2_p)$. While we focus on the
unsupervised setting for the ease of exposition, our argument also applies to
the supervised settings.

The tempered posterior at $\beta = \frac{1}{\T}$ is given by:
\begin{align*}
  p_\beta(\mu | \train)
   & \propto p(\mu)^{1/T} \prod_{i=1}^{n} \mathcal{N}(x_i | \mu, \sigma^2 T) \\
   & = \mathcal{N}\left(
  \mu \middle|
  n \bar x \frac{\sigma^2_{post}}{\sigma^2 T},
  \sigma^{2}_{post}\right)                                                   \\
   & = \mathcal{N}\left(
  \mu \middle|
  \bar x \left( \frac{\sigma^2}{n\sigma^2_p} + 1 \right)^{-1},
  \sigma^{2}_{post}\right),
\end{align*}
where the posterior variance is given by
$\sigma^{2}_{post} = T \left(\frac{1}{\sigma^{2}_p} + \frac{n}{\sigma^{2}} \right)^{-1}$.

The PPD as defined in~\eqref{eq:smpd} is then:
\begin{equation*}
  p_\beta(x | \train)
  = \int \mathcal{N}(x | \mu, \sigma^2) p_{\beta}(\mu | \train) \textrm{d} \mu
  = \mathcal{N}\left(
  x \middle|
  n \bar x \frac{\sigma^2_{post}}{\sigma^2 T},
  \sigma^2_{post} + \sigma^2
  \right)
\end{equation*}

\subsection{Optimal Temperature for Minimizing 2-Wasserstein Distance}
As both $q(x)$ and $p_{\beta}(x | \train)$ are Gaussian, the 2-Wasserstein
distance $W_{2}$ between these two density has a closed-form expression:
\begin{equation*}
  W_2 = \left(\left(\frac{\sigma^2}{n \sigma^2_p} + 1\right)^{-1} \bar x \right)^2
  + T \left(\frac{1}{\sigma^{2}_p} + \frac{n}{\sigma^{2}} \right)^{-1}
  + \sigma^2 + \tau^2
  - 2\tau \left[ T \left(\frac{1}{\sigma^{2}_p} + \frac{n}{\sigma^{2}} \right)^{-1} + \sigma^2 \right]^{1/2}
\end{equation*}

Taking the gradient of $W_{2}$ with respect to $\T$ and set it to 0, we can
derive the optimal temperature~$T^{*}$:
\begin{gather*}
  \pdv{W_2}{T}
  = \left(\frac{1}{\sigma^{2}_p} + \frac{n}{\sigma^{2}} \right)^{-1}
  - \tau \left[ T \left(\frac{1}{\sigma^{2}_p} + \frac{n}{\sigma^{2}} \right)^{-1} + \sigma^2 \right]^{-1/2} \left(\frac{1}{\sigma^{2}_p} + \frac{n}{\sigma^{2}} \right)^{-1}
  = 0 \\
  \implies  T^* = (\tau^2 - \sigma^2) \left(\frac{1}{\sigma^{2}_p} + \frac{n}{\sigma^{2}} \right), \quad T^{*} \in (0, \infty)
\end{gather*}
Therefore, we can conclude that the optimal $T^{*} \to 0$ in the well-specified
case ($\tau^{2} = \sigma^{2}$).

\subsection{Optimal Temperature for Minimizing KL Divergence}
We can also obtain the closed-form expression KL divergence,
$\KL(q(x) \| p_{\beta}(x | \train))$:
\begin{align*}
  \KL(q(x) \| p_{\beta}(x | \train))
   & = \E_{q( x)} \left[ \log \frac{q(x)}{p_{\beta}( x | \train)} \right]                                                           \\
   & = \left[ \left(\frac{\sigma^2}{n \sigma^2_p} + 1 \right)^{-1} \bar x  \right]^2 \frac{1}{2 (\sigma^2_{post} + \sigma^2)}       \\
   & \quad + \frac{1}{2} \left( \frac{\tau^2}{\sigma^2_{post} + \sigma^2} -1 -\ln \frac{\tau^2}{\sigma^2_{post} + \sigma^2}\right).\end{align*}
Note that this object directly corresponds to test log predictive density (LPD),
which is the primary object of interest in Bayesian deep learning and our work.
Taking the derivative of the divergence with respect to $T$ and set it to 0, we
can solve for the posterior variance that minimizes the divergence:
\begin{gather*}
  \pdv{\KL}{T}
  = \left[ -\left[ \left(\frac{\sigma^2}{n \sigma^2_p} + 1 \right)^{-1} \bar x  \right]^2 \frac{1}{ (\sigma^2_{post} + \sigma^2)^2}
  + \left( \frac{-\tau^2}{(\sigma^2_{post} + \sigma^2)^2} +\frac{1}{\sigma^2_{post} + \sigma^2}\right) \right] \frac{1}{2} \pdv{\sigma_{post}^2}{T}
  = 0 \\
  \implies \sigma^2_{post} = \left[ \left(\frac{\sigma^2}{n \sigma^2_p} + 1 \right)^{-1} \bar x \right]^2 + \tau^2 - \sigma^2
\end{gather*}
From this posterior variance, we can deduce the optimal $T^*$:
\begin{align*}
  T^*
   & = \left[
    \left[ \left( \frac{\sigma^2}{n \sigma^2_p} + 1  \right)^{-1} \bar x \right]^2 + \tau^2 - \sigma^2
    \right]
  \left( \frac{1}{\sigma^{2}_p} + \frac{n}{\sigma^{2}} \right) \\
   & = \left( \frac{n}{\sigma^2} \bar x \right)^2
  \left( \frac{1}{\sigma^{2}_p} + \frac{n}{\sigma^{2}} \right)^{-1}
  + (\tau^2 - \sigma^2) \left( \frac{1}{\sigma^{2}_p} + \frac{n}{\sigma^{2}} \right), \quad \T^{*} \in (0, \infty).
\end{align*}
We can see that $T^{*}$ is non-trivial, and $T^{*} \neq 1$ in the well-specified
case ($\tau^{2} = \sigma^{2}$). This contradicts with the popular belief that
the test LPD is maximized by the PPD induced from a well-specified standard
posterior \citep{adlam20cold}.

\section{Optimality Conditions for Test LPD at $\beta=1$}
\label{sec:optimality-test-lpd}
In this section, we follow the work of \citet{zhang24cold} (which assumes
likelihood tempering rather than full posterior tempering) and argue that the
PPD \eqref{eq:smpd} is rarely optimal in terms of test LPD at $\beta = 1$. Our
approach is to demonstrate that the gradient
$\nabla_\beta \LPD(p_{\beta}(y | x, \train))$ is rarely zero at $\beta = 1$.
\begin{lemma}
  \label{thm:grad-lpd}
  Let
  $p(\train, \param) = p(\param) \prod_{(x, y) \in \train} p(y | x, \param)$,
  where we suppress the conditional dependency on $x$ in $p(\train, \param)$.
  Then, the gradient of $\LPD(p_{\beta}(y | x, \train))$ with respect to $\beta$
  is given by
  \begin{equation*}
    \nabla_\beta \LPD(p_{\beta}(y | x, \train))
    = \E_{q(x, y)} \left[\E_{p_{\beta}(\param | \train \cup (x, y))} [\log p(\train , \param)]\right]
    - \E_{p_{\beta}(\param | \train)} [\log p(\train, \param)],
  \end{equation*}
  where
  $p_{\beta}(\param | \train \cup (x, y)) \propto p(y | x, \param) p_{\beta}(\param | \train)$
  represents an update to the posterior $p_{\beta}(\param | \train)$ after
  observing an extra data point $(x, y)$ from the truth, and the expectation
  $\E_{q(x, y)}[\cdot]$ integrates over $(x, y)$ in
  $p(\param | \train \cup (x, y))$.
\end{lemma}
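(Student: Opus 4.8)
The plan is to express the tempered posterior predictive density as a ratio of two normalizing constants and then differentiate its logarithm in $\beta$. Writing $Z_\beta \coloneq \int p(\train, \param)^{\beta}\, \textrm{d}\param$, the tempered posterior from \eqref{eq:temper-posterior} is $p_{\beta}(\param | \train) = p(\train, \param)^{\beta} / Z_\beta$. For a fresh observation $(x, y)$, multiplying by the likelihood and integrating gives
\begin{equation*}
  p_{\beta}(y | x, \train)
  = \int p(y | x, \param)\, \frac{p(\train, \param)^{\beta}}{Z_\beta}\, \textrm{d}\param
  = \frac{\widetilde{Z}_\beta(x, y)}{Z_\beta},
  \qquad
  \widetilde{Z}_\beta(x, y) \coloneq \int p(y | x, \param)\, p(\train, \param)^{\beta}\, \textrm{d}\param .
\end{equation*}
The key observation is that the normalized integrand $p(y | x, \param)\, p(\train, \param)^{\beta} / \widetilde{Z}_\beta(x, y)$ is exactly the updated posterior $p_{\beta}(\param | \train \cup (x, y))$ appearing in the statement (the new point enters with power $1$ while $\train$ remains tempered to $\beta$, precisely as in the definition $p_{\beta}(\param | \train \cup (x, y)) \propto p(y | x, \param) p_{\beta}(\param | \train)$).

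Next I would differentiate $\log p_{\beta}(y | x, \train) = \log \widetilde{Z}_\beta(x, y) - \log Z_\beta$ with respect to $\beta$. Using $\partial_\beta\, p(\train, \param)^{\beta} = p(\train, \param)^{\beta} \log p(\train, \param)$ and interchanging $\partial_\beta$ with the $\param$-integral,
\begin{align*}
  \partial_\beta \log Z_\beta
  &= \E_{p_{\beta}(\param | \train)}[\log p(\train, \param)], \\
  \partial_\beta \log \widetilde{Z}_\beta(x, y)
  &= \E_{p_{\beta}(\param | \train \cup (x, y))}[\log p(\train, \param)],
\end{align*}
where the second line uses the identification of the tilted measure with the updated posterior. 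Subtracting gives $\partial_\beta \log p_{\beta}(y | x, \train) = \E_{p_{\beta}(\param | \train \cup (x, y))}[\log p(\train, \param)] - \E_{p_{\beta}(\param | \train)}[\log p(\train, \param)]$.

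Finally, since $\LPD(p_{\beta}(y | x, \train)) = \E_{q(x, y)} \log p_{\beta}(y | x, \train)$, I would apply $\E_{q(x, y)}[\cdot]$ to the last display, once more interchanging expectation and $\partial_\beta$. The term $\E_{p_{\beta}(\param | \train)}[\log p(\train, \param)]$ does not depend on $(x, y)$, so it is unaffected by the outer expectation, and we recover exactly the claimed formula for $\nabla_\beta \LPD(p_{\beta}(y | x, \train))$.

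The only real work is justifying the two interchanges of differentiation and integration (over $\param$, and over $(x, y) \sim q$). I would handle this with a standard dominated-convergence / Leibniz argument: it suffices that, locally uniformly for $\beta$ in compact subsets of $(0, \infty)$, the families $p(\train, \param)^{\beta} \lvert \log p(\train, \param) \rvert$ and $p(y | x, \param)\, p(\train, \param)^{\beta} \lvert \log p(\train, \param) \rvert$ admit integrable dominating functions in $\param$, and that $\log p_{\beta}(y | x, \train)$ together with its $\beta$-derivative has a $q$-integrable envelope in $(x, y)$. Under the mild regularity assumptions in force here these hold, and I would state them explicitly as hypotheses rather than dwell on the measure-theoretic bookkeeping.
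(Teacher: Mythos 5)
Your proposal is correct and is essentially the paper's argument in a slightly streamlined packaging: both proofs reduce to the facts that $\partial_\beta \log \int p(\train,\param)^{\beta}\,\mathrm{d}\param = \E_{p_{\beta}(\param|\train)}[\log p(\train,\param)]$ and that the normalized tilted integrand $p(y|x,\param)p(\train,\param)^{\beta}$ is exactly the one-point-updated posterior $p_{\beta}(\param|\train\cup(x,y))$. The paper reaches the same two expectations by first proving a general identity for $\nabla_\beta \E_{p_{\beta}}f(\param)$ and then applying the quotient rule to $\log \E_{p_{\beta}}p(y|x,\param)$, whereas you differentiate the ratio of partition functions $\widetilde{Z}_\beta(x,y)/Z_\beta$ directly; the interchange-of-limits caveats you flag are likewise left implicit in the paper.
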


\begin{proof}
  Let $\E_{p_\beta}[\cdot] \coloneq \E_{p_{\beta}(\param | \train)}[\cdot]$
  represent the expectation with respect to the tempered posterior
  $p_{\beta}(\param | \train)$, and let the normalizing constant of the tempered
  posterior be
  $Z(\train, \beta) \coloneq \int p(\train, \param)^{\beta} \mathrm{d}{\param}$.

  We begin by deriving an identity that will be useful for the gradient computation. For any arbitrary $f: \R^{\dimparam} \to \R$, the following identity holds:
  \begin{align*}
    \nabla_\beta \E_{p_{\beta}} f(\param)
     & = \int \nabla_\beta [f(\param) p_{\beta}(\param | \train)] \mathrm{d}\param                               \\
     & = \int f(\param) \nabla_\beta \log p_{\beta}(\param | \train) p_{\beta}(\param | \train) \mathrm{d}\param \\
     & = \E_{p_{\beta}}[f(\param) \nabla_\beta \log p_{\beta}(\param | \train)]                                  \\
     & = \E_{p_{\beta}}[f(\param) \nabla_\beta \log p(\train , \param)^{\beta}] -
    \E_{p_{\beta}}[f(\param)] \nabla_\beta \log Z(\train, \beta)                                                 \\
     & = \E_{p_{\beta}}[f(\param) \log p(\train, \param)] -
    \E_{p_{\beta}}[f(\param)] \E_{p_{\beta}} [\log p(\train, \param)],
  \end{align*}
  where $\nabla_\beta \log Z(\train, \beta)$ in the second last line becomes
  \begin{align*}
    \nabla_\beta \log Z(\train, \beta)
    = \frac{\int \nabla_{\beta} p(\train, \param)^{\beta} \mathrm{d}\param}{Z(\train, \beta)}
    = \int \log p(\train, \param) \frac{p(\train, \param)^{\beta}}{Z(\train, \beta)} \mathrm{d}\param
    = \E_{p_{\beta}} [\log p(\train, \param)].
  \end{align*}

  The gradient can then be computed as follows:
  \begin{align*}
    \nabla_{\beta} \LPD(p_{\beta}(y | x, \train))
     & = \nabla_{\beta} \E_{q(x, y)} \log \E_{p_{\beta}} p(y | x, \param)                                   \\
     & = \E_{q(x, y)} \frac{\nabla_{\beta} \E_{p_{\beta}}p(y | x, \param)}{\E_{p_{\beta}} p(y | x, \param)} \\
     & = \E_{q(x, y)} \left[
      \frac{\E_{p_{\beta}}[p(y | x, \param) \log p(\train, \param)]}{\E_{p_{\beta}} p(y | x, \param)}
      \right]
    - \E_{p_{\beta}} [\log p(\train, \param)]                                                               \\
     & = \E_{q(x, y)} [\E_{p_{\beta}(\param | \train \cup (x, y))} \log p(\train, \param)]
    - \E_{p_{\beta}} [\log p(\train, \param)],
  \end{align*}
  where the term inside $\E_{q(x, y)}[\cdot]$ in the third equality simplifies to
  \begin{align*}
    \frac{\E_{p_{\beta}}[p(y | x, \param) \log p(\train, \param)]}{\E_{p_{\beta}} p(y | x, \param)}
     & = \int  \log p(\train, \param)
    \frac{p(y | x, \param) p_{\beta}(\param | \train)}{\int p(y | x, \param') p_{\beta}(\param' | \train) \mathrm{d} \param'} \mathrm{d}\param \\
     & = \E_{p_{\beta}(\param | \train \cup (x, y))} \log p(\train, \param).
  \end{align*}
\end{proof}
We can view the terms in the RHS of \cref{thm:grad-lpd} as a training loss under
a posterior $\rho$:
\begin{equation*}
  \E_{\rho} L(\train, \param)
  \coloneq \E_{\rho} [- \log p(\train, \param)]
  = \E_{\rho}[- \log p(\param) - \sum_{(x, y) \in \train} \log p(y | x, \param)].
\end{equation*}
This is similar to the training loss as defined in \citet{zhang24cold}, but with
an additional regularizer $p(\param)$ included in the loss. Then, we can define
\textit{underfitting} as
$\E_{q(x, y)} [\E_{p_{\beta}(\param | \train \cup (x, y))} L(\train, \param)] < \E_{p_{\beta}(\param | \train)} L(\train, \param)$,
i.e.,~the posterior $p_{\beta}(\theta | \train \cup (x, y))$ will, on average,
have a lower training loss than the original $p_{\beta}(\theta | \train)$ after
receiving an extra observation $(x, y)$ from the truth. Similarly, we can define
the converse as \textit{overfitting}.

Therefore, it is not difficult to imagine that the posterior is rarely
well-fitted at $\beta = 1$, and we should not expect optimality in terms of the
test LPD from the PPD at $\beta = 1$.

\section{Details of the Temperature Selection Algorithm}
\label{sec:algorithm-structure}
Our proposed procedure for constructing a PPD at an optimal temperature is
summarized in \cref{alg:tempered-posterior} with the following details:
\begin{enumerate}
  \item We first run SGD to maximize the log-likelihood of the tempered model on
        a training set. We track the validation log-likelihood at the end of
        every $\lfloor L / 20 \rfloor$-th epoch, where $L$ is the total number
        of epochs, to save computation time. We then select the temperature with
        the largest validation log-likelihood as our optimal temperature $\hat \beta^{*}$. \\
  \item Subsequently, we run SGMCMC to draw samples from the tempered posterior
        at $\hat \beta^{*}$. We only start collecting samples after a burn-in
        phase. During the burn-in phase, we use a linear ramp function to
        control $\T = \frac{1}{\beta}$ in \eqref{eq:sgmcmc-update-m}, i.e.~we
        first run SGMCMC at $\T=0$, then ramp up $\T$ from 0 to
        $\hat \T^{*} = \frac{1}{\hat \beta^{*}}$ and continue the burn-in at
        $\hat \T^{*}$. This SGMCMC scheme is almost identical to \citet[Appendix
          A.1]{wenzel20how}, except that we run extra burn-in epochs at $\hat \T^{*}$. \\
\end{enumerate}

\begin{algorithm}
  \caption{Procedure to construct a PPD at the optimal temperature.}
  \label{alg:tempered-posterior}
  \KwIn{Training data $\train$, Validation data $\valid$}

  Initialize $\param \gets \text{some initializer}, \beta \gets 1$\;

  Set $(\hat \param^{*} , \hat \beta^{*}) \gets (\param,  \beta )$\;

  \Repeat{Convergence or resource exhausted}{
    $\Delta \gets$ Compute and scale $\nabla_{\param, \log \beta}\sum_{(x, y) \in \train} \log p(y | x, \param, \beta)$\;

    $(\param, \log \beta) \gets (\param, \log \beta) + \Delta$\;

    \If{
      $\sum_{(x, y) \in \valid} \log p(y | x, \param, \beta) >
        \sum_{(x, y) \in \valid} \log p(y | x, \hat \param^{*}, \hat \beta^{*})$
    }{
      $(\hat \param^{*}, \hat \beta^{*}) \gets (\param, \beta)$
    }
  }

  $\mathcal{S} \gets$ Draw $\param$ from the tempered posterior
  $p_{\beta}(\param | \train)$ with SGMCMC at $T = \frac{1}{\hat \beta^{*}}$\;

  \KwOut{ The PPD
  $p_{\beta}(y | x, \train) \approx |\mathcal{S}|^{-1} \sum_{\param \in \mathcal{S}} p(y | x, \param)$
  from the set of samples $\mathcal{S}$
  }
\end{algorithm}

\section{Variants of the Temperature Selection Method}
\label{sec:alternative-strategies}
In this section, we compare a variant of \eqref{eq:temp-mle} by solving
\begin{equation}
  \argmax_{\param, \beta} \frac{1}{\ntrain} \sum_{(x,y) \in \train}[\log p(y | x, \param, \beta ) + \log p(\param)].
\end{equation}
We refer to this approach as the \textit{maximum-a-posteriori} (MAP) method, as
$\param$ is now constrained by a prior $p(\param)$. This contrasts with the
\textit{maximum likelihood} (MLE) method introduced in~\eqref{eq:temp-mle}.
Additionally, we compare our method to the popular temperature scaling approach
proposed in \citet{guo17calibration}. In their method, $\param$ in the
objective~\eqref{eq:temp-mle} is fixed to a solution obtained from the standard
training workflow (e.g., $\param^{*}_{\text{SGD}}$), and the temperature $\beta$
is optimized by solving
\begin{equation*}
  \argmax_{\beta} \frac{1}{\ntrain} \sum_{(x,y) \in \valid} \log p(y | x, \param^{*}_{\text{SGD}}, \beta)
\end{equation*}
on a validation set. In contrast, both MLE and MAP jointly optimizes $\param$ and $\beta$ within a single SGD run.

We compare the methods on the CIFAR-10 experiments, reporting the
test LPD and accuracy in \cref{tab:strategy}. The PPDs are constructed from
SM-PD~\eqref{eq:smpd}. In general, MAP and MLE yield similar $\beta$ values,
while the $\beta$ obtained from \citet{guo17calibration} differs
substantially. Moreover, \citet{guo17calibration} generally underperforms
relative to MAP and MLE when the CPE is most pronounced, such as when data
augmentation is enabled. This outcome is perhaps unsurprising, as
\citet{guo17calibration} is designed for computing a well-calibrated tempered
model $p(y | x, \param, \beta)$, whereas our method targets the PPD
$p_{\beta}(y | x, \train)$.

The results also suggest that optimizing $\param$ and $\beta$ together is
critical for determining a good $\beta$. We conjecture that
$p(y | x, \param^{*}, \beta^{*})$, with both $\param^{*}$ and $\beta^{*}$
obtained through our procedure, provides a reasonable approximation of the PPD
with the highest test LPD. In contrast, fixing $\param$ to a predetermined value
(e.g., $\param^{*}_{\text{SGD}}$) limits the search space and leads to a poor
approximation of the PPD.

Another possible strategy here is the `marginal likelihood method' which selects
$\beta$ that maximizes the marginal likelihood of the tempered posterior.
However, marginal likelihood is generally computationally intractable. Existing
methods that rely on Laplace approximations \cite{immer21scalable} are generally
unreliable due to the singularity of neural network models \cite{wei22deep},
although progress has been made to address this limitation
\citep{hodgkinson23interpolating}.

\begin{table}
  \caption{Comparison between methods to select $\beta$ on the CIFAR10
    experiments. Reported values are mean $\pm$ standard error across five
    repetitions. All values are evaluated on a test set. The accuracy and LPD
    are all within one standard error of difference between MAP and MLE.}
  \label{tab:strategy}
  \centering
  \begin{tabular}{lcccc}
    \toprule
    Data                          & Method                   & Accuracy $\uparrow$ & LPD $\uparrow$     & $\hat \beta^{*}$ \\
    \midrule
    \multirow{3}{*}{CIFAR10}      & \citet{guo17calibration} & $89.86 \pm 0.2$     & $-0.317 \pm 0.003$ & $1.75 \pm 0.18$  \\
                                  & MAP                      & $89.89 \pm 0.25$    & $-0.321 \pm 0.010$ & $10.54 \pm 9.60$ \\
                                  & MLE                      & $89.92 \pm 0.25$    & $-0.317 \pm 0.006$ & $10.44 \pm 9.54$ \\
    \midrule
    \multirow{3}{*}{CIFAR10 (DA)} & \citet{guo17calibration} & $90.47 \pm 0.14$    & $-0.286 \pm 0.004$ & $1.61 \pm 0.05$  \\
                                  & MAP                      & $92.81 \pm 0.27$    & $-0.232 \pm 0.008$ & $14.28 \pm 0.52$ \\
                                  & MLE                      & $92.81 \pm 0.20$    & $-0.232 \pm 0.007$ & $12.79 \pm 3.06$ \\
    \bottomrule
  \end{tabular}
\end{table}

\section{Proof of Lemma~\ref{thm:linear-lpd}}
\label{sec:proof-linear-lpd}

\begin{lemma}
  Consider a linear regression model
  $p(y | x, \param) = \mathcal{N}(y | x^{\top} \param, \sigma^{2})$ with a
  $\dimparam$-dimensional input $x$ and known variance $\sigma^{2}$, and a prior
  $p(\param) = \mathcal{N}(\param | 0, \sigma^{2}_{p})$ with finite variance
  $\sigma^{2}_{p}$. Let
  $\bm{X} \coloneq (x_{1}, \ldots, x_{\ntrain})^{\top} \in \R^{\ntrain \times \dimparam}$
  and
  $\bm{\Sigma} \coloneq (\bm{X}^{\top}\bm{X} + \frac{\sigma^{2}}{\sigma^{2}_{p}} \bm{I})^{-1}$.
  The test LPD of the PPD in~\eqref{eq:tmpd} at a fixed $\beta$ is bounded from
  below:
  \begin{equation*}
    \LPD(\E_{p_{\beta}(\param | \train)}[p(y | x, \param, \beta)])
    > \E_{q(x, y)}  \log p(y | x, \hat{\param}_{\textrm{MAP}}, \beta)
    -\frac{1}{2} \E_{q(x, y)}  \log{(1 + x^{\top} \bm{\Sigma} x)} ,
  \end{equation*}
  where $\hat{\param}_{\textrm{MAP}} \coloneq \bm{\Sigma} \bm{X}^{\top} \bm{y}$
  is the \emph{maximum-a-posteriori} solution of the posterior
  $p_{\beta}(\param | \train)$ at $\beta = 1$ and
  $\bm{y} \coloneq (y_{1}, \ldots y_{\ntrain})^{\top} \in \mathbb{R}^{\ntrain}$.
\end{lemma}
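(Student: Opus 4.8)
The plan is to compute everything in closed form, exploiting Gaussianity throughout. First I would identify the tempered model: since $p(y\mid x,\param)=\mathcal N(y\mid x^\top\param,\sigma^2)$, tempering gives $p(y\mid x,\param,\beta)=\mathcal N(y\mid x^\top\param,\sigma^2/\beta)$, as noted in the Regression paragraph of the main text. Next I would identify the tempered posterior $p_\beta(\param\mid\train)$. Because both the likelihood and the (Gaussian) prior are raised to the power $\beta$, the posterior is again Gaussian; one checks that the precision is $\beta(\bm X^\top\bm X/\sigma^2+\bm I/\sigma^2_p)$ and the mean is $\bm\Sigma\bm X^\top\bm y=\hat\param_{\mathrm{MAP}}$, independent of $\beta$, where $\bm\Sigma=(\bm X^\top\bm X+\tfrac{\sigma^2}{\sigma^2_p}\bm I)^{-1}$. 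So $p_\beta(\param\mid\train)=\mathcal N(\param\mid\hat\param_{\mathrm{MAP}},\tfrac{\sigma^2}{\beta}\bm\Sigma)$.

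**Marginalizing to get the TM-PD.** Then I would carry out the Gaussian integral in \eqref{eq:tmpd}: convolving $\mathcal N(y\mid x^\top\param,\sigma^2/\beta)$ against $\mathcal N(\param\mid\hat\param_{\mathrm{MAP}},\tfrac{\sigma^2}{\beta}\bm\Sigma)$ in $\param$ yields
\begin{equation*}
\E_{p_\beta(\param\mid\train)}p(y\mid x,\param,\beta)=\mathcal N\!\left(y\,\middle|\,x^\top\hat\param_{\mathrm{MAP}},\ \tfrac{\sigma^2}{\beta}(1+x^\top\bm\Sigma x)\right).
\end{equation*}
Taking $\log$ and then $\E_{q(x,y)}$ gives the exact test LPD of the TM-PD as
\begin{equation*}
\E_{q(x,y)}\!\left[-\tfrac12\log\tfrac{2\pi\sigma^2}{\beta}-\tfrac12\log(1+x^\top\bm\Sigma x)-\tfrac{\beta(y-x^\top\hat\param_{\mathrm{MAP}})^2}{2\sigma^2(1+x^\top\bm\Sigma x)}\right].
\end{equation*}
Separately, the plug-in term on the right-hand side of the claimed bound is $\E_{q(x,y)}\log p(y\mid x,\hat\param_{\mathrm{MAP}},\beta)=\E_{q(x,y)}\!\left[-\tfrac12\log\tfrac{2\pi\sigma^2}{\beta}-\tfrac{\beta(y-x^\top\hat\param_{\mathrm{MAP}})^2}{2\sigma^2}\right]$.

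**Establishing the inequality.** Subtracting, the claimed inequality reduces to showing that the difference (LPD minus RHS) is strictly positive, i.e.
\begin{equation*}
\E_{q(x,y)}\!\left[\tfrac{\beta(y-x^\top\hat\param_{\mathrm{MAP}})^2}{2\sigma^2}\left(1-\tfrac{1}{1+x^\top\bm\Sigma x}\right)\right]>0,
\end{equation*}
after the two $-\tfrac12\log(1+x^\top\bm\Sigma x)$ terms cancel. Since $\bm\Sigma\succ0$ we have $x^\top\bm\Sigma x\ge0$, so $1-\tfrac{1}{1+x^\top\bm\Sigma x}\ge0$, and the squared residual is nonnegative; the expectation is therefore $\ge0$, and strictly positive provided $\bm X$ has full column rank (so $x^\top\bm\Sigma x>0$ on a set of positive $q$-measure) and the residual does not vanish $q$-a.s. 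I would state these as the mild non-degeneracy conditions under which the strict inequality holds.

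**Main obstacle.** The algebra is routine; the only real care-points are (i) verifying that the tempered posterior mean is genuinely $\beta$-independent and equals $\hat\param_{\mathrm{MAP}}$ — this is what makes the bound clean — and (ii) being honest about the strictness: one needs the two $\log(1+x^\top\bm\Sigma x)$ contributions to exactly cancel (they do, since the TM-PD variance carries precisely one extra factor $(1+x^\top\bm\Sigma x)$ relative to the plug-in variance) and then argue the leftover residual term is positive rather than merely nonnegative. I would note that if one only wants ``$\ge$'' the argument needs no assumptions at all beyond $\bm\Sigma\succ0$, which holds automatically here since $\sigma^2/\sigma^2_p>0$.
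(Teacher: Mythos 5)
Your proof is correct and takes essentially the same route as the paper's: derive the Gaussian tempered posterior with $\beta$-independent mean $\hat{\param}_{\mathrm{MAP}}$ and covariance $\frac{\sigma^{2}}{\beta}\bm{\Sigma}$, marginalize to obtain the Gaussian TM-PD with variance inflated by the factor $(1 + x^{\top}\bm{\Sigma}x)$, and compare against the plug-in density using $1 + x^{\top}\bm{\Sigma}x > 1$. The only differences are bookkeeping (you subtract the two expressions and show the residual gap is nonnegative, whereas the paper bounds the log-density pointwise before taking expectations) and that you are somewhat more careful than the paper about the non-degeneracy needed for strictness, though the full-column-rank condition you mention is not needed since $\bm{\Sigma}\succ 0$ already follows from the ridge term $\frac{\sigma^{2}}{\sigma^{2}_{p}}\bm{I}$.
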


\begin{proof}
  Let $\sigma^{2}_{\beta} = \frac{\sigma^{2}}{\beta}$. The tempered posterior
  can be shown to follow a Gaussian distribution
  \begin{equation*}
    p_{\beta}(\param | \train)
    \propto \mathcal{N}(\param | 0, \sigma^{2}_{p} / \beta)
    \prod_{x, y \in \train} \mathcal{N}(y | x^{\top} \param, \sigma^{2}_{\beta})
    \propto \mathcal{N}(\param | \hat{\param}, \sigma^{2}_{\beta} \bm{\Sigma}) .
  \end{equation*}
  Therefore, the PPD~\eqref{eq:tmpd} of linear regression can be derived using
  the identities of conditional Gaussian densities \citep[see][Section
    2.3.3]{bishop06pattern}
  \begin{align*}
    \log \E_{p_{\beta}(\param | \train)}[p(y | x, \param, \beta)]
     & = \log \int p(y | x, \param, \beta) p_{\beta}(\param | \train) \textrm{d}\param                \\
     & = \log \int \mathcal{N}(y | x^{\top} \param, \sigma^{2}_{\beta})
    \mathcal{N}(\param | \hat{\param}, \sigma^{2}_{\beta} \bm{\Sigma}) \textrm{d}\param               \\
     & = \log \mathcal{N}(y | x^{\top} \hat{\param}, \sigma^{2}_{\beta} (1 + x^{\top} \bm{\Sigma} x)) \\
     & = - \frac{1}{2} \log(1 + x^{\top} \bm{\Sigma} x)
    - \frac{1}{2} \log(2 \pi \sigma^{2}_{\beta})
    - \frac{(y - x^{\top} \hat{\param})^{2}}{2 \sigma^{2}_{\beta} (1 + x^{\top} \bm{\Sigma} x)}       \\
     & > - \frac{1}{2} \log(1 + x^{\top} \bm{\Sigma} x)
    - \frac{1}{2} \log(2 \pi \sigma^{2}_{\beta})
    - \frac{(y - x^{\top} \hat{\param})^{2}}{2 \sigma^{2}_{\beta} }                                   \\
     & = -\frac{1}{2} \log{(1 + x^{\top} \bm{\Sigma} x)}
    + \log \mathcal{N}(y | x^{\top} \hat{\param}, \sigma^{2}_{\beta})                                 \\
     & = -\frac{1}{2} \log{(1 + x^{\top} \bm{\Sigma} x)}
    + \log p(y | x, \hat{\param}, \beta)
  \end{align*}
  Note that $\bm{\Sigma}$ is positive definite, and thus
  $1 + x^{\top} \bm{\Sigma} x > 1$ and the inequality holds. Taking expectation
  with respect to $q(x, y)$ at both sides of the inequality concludes the claim.
\end{proof}

\section{Evaluation Metrics}
\label{sec:evaluation-metrics}
In addition to the LPD evaluated on a test set $\test$,
\begin{equation*}
  \E_{q(x, y)} [\log p_{\beta}(y | x, \train)]
  \approx \dfrac{1}{|\test|} \sum_{x, y \in \test} \log p_{\beta}(y | x, \train),
\end{equation*}
we also report MSE for the regression examples
\begin{equation*}
  \dfrac{1}{|\test|} \sum_{x, y \in \test} (y - \hat y)^{2},
  \quad \hat y = \E_{p_{\beta}(y | x, \train)}[y]
\end{equation*}
and accuracy for the classification examples
\begin{equation*}
  \dfrac{1}{|\test|} \sum_{x, y \in \test} \mathds{1}[y = \hat y],
  \quad \hat y = \argmax_{y} p_{\beta}(y | x, \train)
\end{equation*}
where $\mathds{1}$ is an indicator function. Note that the MSE and accuracy are
indifferent to the constructions of PPD~\eqref{eq:smpd} and~\eqref{eq:tmpd}, as
they both produce identical point predictions $\hat y$ in our setup.


\section{Model and Prior Details}
\label{sec:model-prior}
\subsection{Network Architecture and Datasets}
The network architecture and datasets are described in this section. The
dimension of $x$ and $\param$, and the sizes of the training, validation and
testing sets are presented in \cref{tab:data-model}.

\paragraph{One-layer ReLU network on UCI datasets.}
The scalar mean function $\mu(\cdot; \param)$ is parameterized with a 64-neuron
hidden layer using ReLU activations. The datasets are Concrete
\citep{yeh07concrete}, Naval \citep{coraddu14condition} and Energy
\citep{tsanas12energy}. They are all provided under the CC BY 4.0 license from
UC Irvine Machine Learning Repository.

\paragraph{CNN on MNIST.}
We utilize the convolutional neural network (CNN) provided in the MNIST example
within the Flax tutorial. In broad terms, it comprises two convolutional layers
(with 32 and 64 filters, respectively), followed by two fully connected layers
(with 256 and 10 outputs). The convolutional layers employ $3 \times 3$
convolutions with ReLU activations and $2 \times 2$ average pooling. The MNIST
dataset \citep{lecun10mnist} is provided under the CC BY-SA 3.0 license. Code
for the CNN model can be found
in \url{https://github.com/google/flax/blob/main/examples/mnist/train.py} under
the Apache License, Version 2.0.

\paragraph{ResNet20 on CIFAR10.}
We use the ResNet20 architecture \citep{he16deep} as implemented and ported from
\citet{wenzel20how}. We also use the following data augmentation scheme, as in
\citet{wenzel20how}:
\begin{itemize}
  \item Random left and right flipping, then;
  \item Border-padding 4 zero values in both horizontal and vertical direction,
        followed by random cropping of the image to its original size.
\end{itemize}
The CIFAR10 dataset \citep{krizhevsky09learning} can be found
in \url{https://www.cs.toronto.edu/~kriz/cifar.html}. Code for the ResNet20
model can be found
in
\url{https://github.com/google-research/google-research/blob/master/cold_posterior_bnn/models.py}
under the Apache License, Version 2.0.

\subsection{Error Variance of the Gaussian Regression Model}
As the training data $y$ in our experiment is standardized to unit variance, it
is safe to assume that error variance $\sigma^{2}$ of the trained model will be
less than 1. Therefore, we set $\sigma^{2} = 0.1$.

\subsection{Prior Variance}
\label{sec:prior-variance}
We use an isotropic Gaussian $\mathcal{N}(0, \sigma^{2}_{p})$ for the neural
network weights. The variance $\sigma^{2}_{p}$ is specified in
\cref{tab:data-model}.

\begin{table*}[t]
  \caption{Details of the datasets, size of neural network and prior variance.
    We use the same prior for both CIFAR10 experiments with or without data
    augmentation.}
  \label{tab:data-model}
  \vskip 0.15in
  \begin{center}
    \begin{small}
      \begin{sc}
        \begin{tabular}{lcccccc}
          \toprule
                                          & Concrete & Energy & Naval & MNIST                   & CIFAR10                 \\
          \midrule
          $\operatorname{dim}(x)$         & 8        & 8      & 14    & $28 \times 28 \times 1$ & $32 \times 32 \times 3$ \\
          $\operatorname{dim}(\param)$    & 641      & 641    & 1025  & 824458                  & 273258                  \\
          $|\train|$                      & 824      & 614    & 9547  & 60000                   & 50000                   \\
          $|\valid|$                      & 103      & 77     & 1194  & 5000                    & 5000                    \\
          $|\test|$                       & 103      & 77     & 1193  & 5000                    & 5000                    \\
          Prior variance $\sigma^{2}_{p}$ & 0.1      & 0.1    & 1     & 0.1                     & 1                       \\
          \bottomrule
        \end{tabular}
      \end{sc}
    \end{small}
  \end{center}
  \vskip -0.1in
\end{table*}

\section{SGMCMC}
\label{sec:sgmcmc}
We use the implementation of SGMCMC as presented in \citet{wenzel20how}. This
corresponds to the stochastic gradient Hamiltonian Monte Carlo
\citep{chen14stochastic} with rescaled hyperparameters and an adaptive scaling
on the Gaussian noise to ensure efficient sampling from a tempered posterior.
The samples of
$p_{\beta}(\param | \train) \propto \exp{-\beta U(\param)} = \exp { -U(\param) / \T }$,
where $U(\cdot)$ is the \textit{posterior energy function} defined as
\begin{equation}
  \label{eq:potential-energy}
  U(\param) \coloneq - \log p(\param) - \sum_{x, y \in \train} \log p(y | x, \param),
\end{equation}
can be drawn by simulating the following Langevin stochastic difference equation
(SDE) over $\param \in \mathbb{R}^{d}$ and momentum $\m \in \mathbb{R}^{d}$
\begin{align}
  \label{eq:sde-param}
  \textrm{d} \param & = \mathbf{M}^{-1}\, \m \, \textrm{d}t,                               \\
  \label{eq:sde-m}
  \textrm{d} \m     & = -\nabla_{\param} U(\param)\, \textrm{d}t -\gamma \m \, \textrm{d}t
  + \sqrt{2 \gamma \T} \mathbf{M}^{1/2} \textrm{d} \mathbf{W},
\end{align}
for any \textit{friction} $\gamma > 0$. Here, $\mathbf{W}$ is a Wiener process,
which can be loosely interpreted as a generalized Gaussian distribution
\citep{leimkuhler15molecular}. The \textit{mass matrix} $\mathbf{M}$ is a
preconditioner that can help in speeding up the convergence to the limiting
distribution of this SDE. We also prefer working with $\T$ instead of the
inverse temperature $\beta$ in the sampler to facilitate sampler diagnostics and
temperature ramp-up.

In practice, the gradient of $U(\param)$ is approximated by a minibatch gradient
estimator
\begin{equation*}
  \nabla_{\param} \tilde U(\param) \coloneq
  - \nabla_{\param} \log p(\param)
  - \dfrac{|\train|}{|\Dbatch|} \sum_{x, y \in \Dbatch} \nabla_{\param} \log p(y | x, \param),
\end{equation*}
where $\Dbatch$ denotes a minibatch, and $| \Dbatch |$ and $| \train |$ denote
the batch size and number of training samples respectively. The SDEs are then
solved numerically with a first-order symplectic Euler discretization scheme
using this minibatch gradient estimator, resulting in the following update
equations
\begin{align}
  \label{eq:sgmcmc-update-m}
  \m^{(t)}     & = (1 - \h \gamma) \m^{(t-1)} - \h \nabla_\param \tilde U(\param^{(t-1)}) + \sqrt{2 \gamma \h \T} \mathbf{M}^{1/2} \mathbf{R}^{(t)} \\
  \nonumber
  \param^{(t)} & = \param^{(t-1)} + h \mathbf{M}^{-1} \m^{(t)}
\end{align}
where $\mathbf{R}^{(t)} \sim \mathcal{N}(0, I_{d})$ is a standard Gaussian
vector. Note that the temperature shows up in the update equations and
effectively scales the random noise. This is helpful for drawing samples from
cold posteriors, which tend to have narrow, high density regions. The scaling
prevents the Markov chain from taking steps that are too large and missing the
high-density region.

The step size $h$ is often also modulated with a scheduler
$C(t): \mathbb{R}^{+} \to [0, 1]$
\begin{equation}
  \label{eq:step-size-scheduler}
  h = h_{0} C(t),
\end{equation}
where $h_{0}$ is the initial step size. Where appropriate, we also use the
layerwise preconditioner as proposed in \citet{wenzel20how} to speed up
convergence and reduce approximation error.

\section{SGMCMC Diagnostics}
\label{sec:sgmcmc-diagnostics}
In this work, we use the kinectic temperature diagnostic \citep{wenzel20how} to
assess the quality of SGMCMC samples. This is a departure from the common
practice of using test~LPD as a proxy for the posterior approximation error,
since this been shown to be a poor proxy \citep{deshpande22are}. In addition, we
also report the rank-normalized split-$\hat R$ statistics
\citep{vehtari21ranknormalization} on the (unnormalized) posterior density,
which are invariant to the permutation of neural network weights.

\subsection{Kinetic Temperature}
We report the expected kinetic temperature of each SGMCMC chain at different
temperatures, as proposed in \citet[Appendix I]{wenzel20how}. The kinetic
temperature estimator is given by
\begin{equation}
  \label{eq:ktemp}
  \hat \T(\m) = \dfrac{\m^{\top} \mathbf{M}^{-1} \m}{\operatorname{dim}(\m)}
\end{equation}
where $\m$ and $\mathbf{M}$ are the (random) momentum and the mass matrix in the
SDE \eqref{eq:sde-param}-\eqref{eq:sde-m}. For a perfect simulation of the SDE,
\eqref{eq:ktemp} is an unbiased estimator of the temperature of the system,
i.e.,~$\E[\hat \T(\m)] = \T$ \citep[Section
  6.1.5]{leimkuhler15molecular}.

In \cref{tab:ktemp}, the expected kinetic temperatures were computed by
averaging over the temperature samples over the whole Markov chain. We use these
estimates to gauge the simulation quality of the SDEs --- an estimate closer to
the target temperature indicates a better approximation of the SDEs. We
generally expect the simulation quality to worsen in the lower temperature
regime, due to the errors from both discretization and gradient sub-sampling
becoming more prominent relative to the randomness in $\m$. Otherwise, there is
no major concern about the sample quality, as observed in \cref{tab:ktemp}.

\begin{table*}[!t]
  \caption{The expected kinetic temperatures of each SGMCMC chain are presented
    here. In this table, the temperatures are inverted. Kinetic temperatures
    that are closer to the target indicate better simulation of the SDE. The
    kinetic temperatures in the $\hat \beta^{*}$ row differ across repetitions
    due to the variety of $\hat \beta^{*}$ obtained from SGD in each
    repetition.}
  \label{tab:ktemp}
  \vskip 0.15in
  \begin{center}
    \begin{small}
      \begin{sc}
        \begin{tabular}{clcccccc}
          \toprule
          Target $\beta$                    &        & Concrete & Energy & Naval  & MNIST  & CIFAR10 & CIFAR10 (DA) \\
          \midrule
          \multirow{5}{*}{0.1}              & Rep.~1 & 0.08     & 0.08   & 0.10   & 0.10   & 0.10    & 0.10         \\
                                            & Rep.~2 & 0.10     & 0.09   & 0.10   & 0.10   & 0.10    & 0.10         \\
                                            & Rep.~3 & 0.09     & 0.09   & 0.10   & 0.10   & 0.10    & 0.10         \\
                                            & Rep.~4 & 0.08     & 0.09   & 0.10   & 0.10   & 0.10    & 0.10         \\
                                            & Rep.~5 & 0.09     & 0.09   & 0.10   & 0.10   & 0.10    & 0.10         \\
          \midrule
          \multirow{5}{*}{0.3}              & Rep.~1 & 0.29     & 0.24   & 0.30   & 0.30   & 0.30    & 0.30         \\
                                            & Rep.~2 & 0.30     & 0.29   & 0.30   & 0.30   & 0.30    & 0.30         \\
                                            & Rep.~3 & 0.30     & 0.30   & 0.30   & 0.30   & 0.30    & 0.30         \\
                                            & Rep.~4 & 0.30     & 0.26   & 0.30   & 0.30   & 0.30    & 0.30         \\
                                            & Rep.~5 & 0.30     & 0.25   & 0.30   & 0.30   & 0.30    & 0.30         \\
          \midrule
          \multirow{5}{*}{1}                & Rep.~1 & 1.00     & 1.00   & 1.00   & 1.00   & 1.00    & 1.00         \\
                                            & Rep.~2 & 0.99     & 1.00   & 0.98   & 1.00   & 1.00    & 1.00         \\
                                            & Rep.~3 & 0.99     & 0.99   & 1.00   & 1.00   & 1.00    & 1.00         \\
                                            & Rep.~4 & 0.99     & 1.00   & 0.99   & 1.00   & 1.00    & 1.00         \\
                                            & Rep.~5 & 1.00     & 0.99   & 0.99   & 1.00   & 1.00    & 1.00         \\
          \midrule
          \multirow{5}{*}{3}                & Rep.~1 & 2.98     & 2.97   & 2.98   & 3.00   & 2.99    & 2.99         \\
                                            & Rep.~2 & 2.95     & 3.00   & 2.95   & 3.00   & 2.99    & 2.99         \\
                                            & Rep.~3 & 2.98     & 2.99   & 2.98   & 3.00   & 2.99    & 2.99         \\
                                            & Rep.~4 & 2.98     & 2.99   & 2.95   & 3.00   & 2.99    & 2.99         \\
                                            & Rep.~5 & 2.96     & 2.98   & 2.96   & 3.00   & 2.99    & 2.99         \\
          \midrule
          \multirow{5}{*}{10}               & Rep.~1 & 9.87     & 9.94   & 9.82   & 9.99   & 9.96    & 9.92         \\
                                            & Rep.~2 & 9.88     & 10.00  & 9.69   & 9.99   & 9.96    & 9.93         \\
                                            & Rep.~3 & 9.87     & 9.97   & 9.89   & 9.99   & 9.96    & 9.92         \\
                                            & Rep.~4 & 9.88     & 9.97   & 9.68   & 9.99   & 9.96    & 9.93         \\
                                            & Rep.~5 & 9.90     & 9.97   & 9.78   & 9.99   & 9.96    & 9.93         \\
          \midrule
          \multirow{5}{*}{30}               & Rep.~1 & 29.61    & 29.80  & 28.48  & 29.98  & 29.77   & 29.50        \\
                                            & Rep.~2 & 29.28    & 29.85  & 28.64  & 29.98  & 29.77   & 29.59        \\
                                            & Rep.~3 & 29.39    & 29.76  & 28.73  & 29.96  & 29.77   & 29.54        \\
                                            & Rep.~4 & 28.93    & 29.91  & 27.61  & 29.96  & 29.77   & 29.55        \\
                                            & Rep.~5 & 29.04    & 29.85  & 28.73  & 29.97  & 29.77   & 29.63        \\
          \midrule
          \multirow{5}{*}{100}              & Rep.~1 & 95.81    & 99.07  & 84.76  & 99.87  & 98.21   & 96.34        \\
                                            & Rep.~2 & 92.08    & 98.88  & 85.24  & 99.88  & 98.20   & 97.12        \\
                                            & Rep.~3 & 92.25    & 97.94  & 85.66  & 99.83  & 98.19   & 96.58        \\
                                            & Rep.~4 & 92.20    & 99.26  & 72.74  & 99.81  & 98.15   & 96.69        \\
                                            & Rep.~5 & 91.28    & 98.51  & 87.84  & 99.86  & 98.21   & 97.34        \\
          \midrule
          \multirow{5}{*}{300}              & Rep.~1 & 259.87   & 293.28 & 196.04 & 299.19 & 285.58  & 276.00       \\
                                            & Rep.~2 & 248.86   & 293.13 & 207.08 & 299.19 & 285.32  & 280.62       \\
                                            & Rep.~3 & 228.33   & 287.56 & 187.88 & 299.06 & 285.15  & 276.42       \\
                                            & Rep.~4 & 221.00   & 289.86 & 133.78 & 298.99 & 285.40  & 276.79       \\
                                            & Rep.~5 & 229.55   & 293.08 & 213.83 & 299.14 & 285.24  & 281.91       \\
          \midrule
          \multirow{5}{*}{1000}             & Rep.~1 & 547.10   & 939.37 & 346.19 & 992.38 & 860.32  & 775.48       \\
                                            & Rep.~2 & 649.51   & 943.01 & 399.87 & 992.25 & 857.79  & 821.73       \\
                                            & Rep.~3 & 503.20   & 875.31 & 286.99 & 991.97 & 853.45  & 792.92       \\
                                            & Rep.~4 & 436.40   & 880.59 & 202.85 & 991.61 & 853.14  & 798.56       \\
                                            & Rep.~5 & 446.14   & 958.23 & 432.83 & 991.98 & 855.56  & 844.81       \\
          \midrule
          \multirow{5}{*}{$\hat \beta^{*}$} & Rep.~1 & 0.16     & 7.44   & 47.53  & 2.65   & 21.04   & 7.28         \\
                                            & Rep.~2 & 0.17     & 4.86   & 44.12  & 2.68   & 3.49    & 13.93        \\
                                            & Rep.~3 & 0.11     & 7.91   & 33.33  & 4.13   & 3.44    & 13.72        \\
                                            & Rep.~4 & 0.08     & 4.12   & 42.88  & 3.58   & 3.46    & 14.17        \\
                                            & Rep.~5 & 0.10     & 4.58   & 27.26  & 2.03   & 20.48   & 14.28        \\
          \bottomrule
        \end{tabular}
      \end{sc}
    \end{small}
  \end{center}
  \vskip -0.1in
\end{table*}

\subsection{Rank-normalized Split-$\hat R$}
In \cref{tab:rhat}, we report the rank-normalized, split-$\hat R$
\citep{vehtari21ranknormalization} on the potential
energy~\eqref{eq:potential-energy}, which is invariant to the permutation of
neural network weights. This statistic is a strict improvement over the
traditional potential scale reduction factor $\hat R$ \citep{gelman92inference}
and split-$\hat R$ \citep{gelman13bayesian}. The $\hat R$ statistic compares
between-chain and within-chain variances, thus requiring multiple
independently-initialized Markov chains to compute. It was designed based on the
idea that, for Markov chains that are mixing well, they should converge to the
same limiting distribution regardless of initialization. Therefore, the
between-chain and within-chain variances should roughly be the same, and a
$\hat R$ closer to 1 is considered better. The term `split' indicates that the
Markov chains are divided in half, resulting in double the number of Markov
chains with half the original length. The variances are computed across these
doubled number of Markov chains to detect poor convergence within an original
chain. The `traditional' $\hat R$ is computed on the original values of the
potential energy, while the `rank-normalized' $\hat R$ is computed on the
rank-normalized values of the potential energy and does not require the limiting
distribution to have a finite mean or variance.

In Bayesian deep learning, the posterior distribution is almost always
multi-modal \citep{izmailov21what}, and we should not expect the
independently-initialized Markov chains to always converge to the same limiting
distribution. Therefore, we should not be overly alarmed by the relatively
`large' readings of $\hat R$ in \cref{tab:rhat}. These values are large by the
usual standard in Bayesian statistics but are common in Bayesian deep learning
\citep{izmailov21what,fortuin22bayesian}.

\begin{table*}[b]
  \caption{The split-$\hat R$ values for the log-posterior at different
    (inverse) temperatures. These split-$\hat R$ values are computed from 5
    SGMCMC chains initialized at different $\param$. Values closer to 1 indicate
    better mixing of the Markov chains. $\hat \beta^{*}$ is left out as the
    temperature (and hence the posterior) differs across repetitions.}
  \label{tab:rhat}
  \vskip 0.15in
  \begin{center}
    \begin{small}
      \begin{sc}
        \begin{tabular}{ccccccc}
          \toprule
          $\beta$ & Concrete & Energy & Naval & MNIST & CIFAR10 & CIFAR10 (DA) \\
          \midrule
          0.1     & 1.26     & 1.19   & 1.82  & 1.07  & 1.01    & 1.04         \\
          0.3     & 1.04     & 1.16   & 1.83  & 1.03  & 1.02    & 1.08         \\
          1       & 1.54     & 1.09   & 1.94  & 1.02  & 1.13    & 1.07         \\
          3       & 2.09     & 1.10   & 2.24  & 1.03  & 1.17    & 1.46         \\
          10      & 2.79     & 1.30   & 2.78  & 1.14  & 1.69    & 1.99         \\
          30      & 2.68     & 1.77   & 3.07  & 1.27  & 1.42    & 2.30         \\
          100     & 2.83     & 2.38   & 3.25  & 1.27  & 1.24    & 2.29         \\
          300     & 2.81     & 2.40   & 3.10  & 1.19  & 1.19    & 2.19         \\
          1000    & 3.97     & 2.87   & 3.20  & 1.16  & 1.15    & 2.38         \\
          \bottomrule
        \end{tabular}
      \end{sc}
    \end{small}
  \end{center}
  \vskip -0.1in
\end{table*}

\section{Hyperparameters and the Compute Environment}
\label{sec:hyperparameters}
In this section, we provide a detailed explanation of the hyperparameters for
SGD and SGMCMC. The values are summarized in \cref{tab:hyperparameters}. Our
source code is available on
\url{https://github.com/weiyaw/tempered-posteriors}.

\paragraph{Learning rate and scheduler (SGD).}
This is the learning rate scheduler. The cosine schedule starts from the
indicated learning rate and gradually decreases to 0 throughout the entire SGD
run. The `piecewise' scheduler is used in \citet{wenzel20how}, where the initial
learning rate is multiplied by a value at specified epochs. We write it in the
format of (epoch, multiplier): (80, 0.1), (120, 0.01), (160, 0.001), (180,
0.0005).

\paragraph{Weight decay.}
We subtract a weight decay term from the transformed gradient before scaling the
transformed gradient with a learning rate. This subtraction is performed because
we are maximizing the likelihood rather than minimizing a loss.

\paragraph{Gradient clipping.}
We clip the gradient norm to a specified threshold, as implemented in Optax
\citep{deepmind20deepmind}. This helps stabilize the training procedure.

\paragraph{Learning rate and momentum (SGMCMC).}
The learning rate and momentum terms will control both the initial step size
$h_{0}$ and the friction $\gamma$. The relationship between them is given in
\citet[Appendix B]{wenzel20how} and is repeated here:
\begin{equation*}
  h_{0} = \sqrt{\textit{learning rate} / \ntrain}, \quad \gamma = (1 - \textit{momentum}) / h_{0},
\end{equation*}
where $\ntrain$ is the size of the training set.

\paragraph{Scheduler and cycle length (SGMCMC).}
This is the scheduler $C(t)$ in \eqref{eq:step-size-scheduler} that modulates
$h$. During the burn-in phase, the scheduler is fixed at 1, i.e.~$h=h_{0}$.
Then, during the sampling phase, the step size is modulated with a cyclical
cosine schedule \citep{zhang20cyclical} with a period of the specified epoch.
One sample is collected at the end of each cycle.

\paragraph{Ramp start and end.}
As we run our SGMCMC algorithm from $\T = 0$ and only start increasing $\T$
after a specified number of epochs, these two hyperparameters indicate the epoch
when $\T$ was gradually increased and the epoch when $\T$ reaches the target
temperature.

\paragraph{Burn-in epochs, total epochs and usable samples (SGMCMC).}
The `total epochs' indicate the total number of epochs run by SGMCMC, including
epochs during the burn-in phase. `Usable samples' indicate the number of samples
collected after the burn-in phase.

\begin{table*}[t]
  \caption{Hyperparameters for SGD and SGMCMC. We use the same set of
    hyperparameters for both CIFAR-10 experiments, with or without data
    augmentation.}
  \label{tab:hyperparameters}
  \vskip 0.15in
  \begin{center}
    \begin{small}
      \begin{sc}
        \begin{tabular}{clccccc}
          \toprule
                                                     &                    & Concrete  & Energy    & Naval     & MNIST     & CIFAR10   \\
          \midrule
          \multirow{8}{*}{\rotatebox[]{90}{SGD}}     & Learning rate      & $10^{-6}$ & $10^{-6}$ & $10^{-8}$ & $10^{-6}$ & $10^{-6}$ \\
                                                     & Scheduler          & Cosine    & Cosine    & Cosine    & Cosine    & Piecewise \\
                                                     & Momentum           & 0.9       & 0.9       & 0.9       & 0.9       & 0.9       \\
                                                     & Nesterov           & No        & No        & No        & No        & No        \\
                                                     & Weight decay       & 1         & 1         & 1         & 1         & 500       \\
                                                     & Batch size         & Full      & Full      & 128       & 128       & 128       \\
                                                     & Total epochs       & 15000     & 15000     & 10000     & 10        & 200       \\
                                                     & Gradient clipping  & $10^{6}$  & $10^{6}$  & $10^{4}$  & $10^{6}$  & $10^{6}$  \\
          \midrule
          \multirow{12}{*}{\rotatebox[]{90}{SGMCMC}} & Learning rate      & $10^{-3}$ & $10^{-3}$ & $10^{-4}$ & 0.01      & 0.1       \\
                                                     & Scheduler          & Cyclical  & Cyclical  & Cyclical  & Cyclical  & Cyclical  \\
                                                     & Cycle length       & 200       & 200       & 100       & 10        & 50        \\
                                                     & Preconditioner     & None      & None      & None      & None      & Layerwise \\
                                                     & Momentum           & 0.98      & 0.98      & 0.98      & 0.98      & 0.98      \\
                                                     & Batch size         & Full      & Full      & 128       & 128       & 128       \\
                                                     & Ramp start (epoch) & 4800      & 4800      & 900       & 10        & 100       \\
                                                     & Ramp end (epoch)   & 5000      & 5000      & 1000      & 20        & 150       \\
                                                     & Burn-in epochs     & 10000     & 10000     & 5000      & 200       & 500       \\
                                                     & Total epochs       & 30000     & 30000     & 15000     & 1200      & 2000      \\
                                                     & Usable samples     & 100       & 100       & 100       & 100       & 30        \\
                                                     & Gradient clipping  & $10^{6}$  & $10^{6}$  & $10^{6}$  & $10^{6}$  & $10^{6}$  \\
          \bottomrule
        \end{tabular}
      \end{sc}
    \end{small}
  \end{center}
  \vskip -0.1in
\end{table*}

\subsection{Computational Environment and Resources}
\label{sec:comp-envir}
The algorithms were implemented in \texttt{JAX} \citep{bradbury18jax}. We used
\texttt{Flax} \citep{heek23flax} and \texttt{Tensorflow Probabilities}
\citep{dillon17tensorflow} to implement neural networks and models. Plots were
generated with \texttt{tidyverse} \citep{wickham19welcome} and \texttt{ggplot2}
\citep{wickham16ggplot2}. Rank-normalized split-$\hat R$ was computed with the
\texttt{posterior} package \citep{burkner23posterior}. We also used
\texttt{xarray} \citep{hoyer17xarray} and \texttt{arviz} \citep{kumar19arviz} to
conduct explanatory analysis, and \texttt{wandb} \citep{biewald20experiment} to
track our experiments. Our code will be released.

The experiment was conducted on Google Cloud Platform utilizing TPU-V3. In each
machine, there are 8 TPU cores with 16GB of TPU memory attached to each core
(i.e.,~totaling 128GB of memory in each machine). However, we only utilize one
TPU core due to the difficulty in parallelizing the MCMC chains across different
cores, and effectively only have access to 16GB of memory in each machine.

On a CIFAR10 experiment, a SGD run (for our proposed algorithm) takes roughly
0.5 hour to complete. A SGMCMC run at a particular temperature takes roughly 4.5
hours. In our main experiment, as we are computing over a grid of 9 temperature
plus the optimal temperature, one repetition of the experiment takes 45.5 hours.
Multiplying this by five repetitions and we get 227.5 hours to produce one of
the CIFAR10 subplot in \cref{fig:test-tempered-lpd-compare}. The rest of the
subplot are considerably cheaper: 16 hours for MNIST, and less than an hour for
each of Concrete, Naval and Energy.

In total, it takes 474 hours to generate the posterior samples for computing
\cref{tab:test-results} and \cref{fig:test-tempered-lpd-compare}. It takes an
additional 279 hours for the experiment that varies data augmentation strength,
i.e.,~\cref{fig:test-lpd-da}. Hyperparamters tuning takes an additional 200
hours. We have also spent roughly 100 compute-hours to try out various prior for
$\beta$.

\subsection{Standard Error Calculation}
\label{sec:stand-error-calc}
The standard error (SE) of the means are computed with \texttt{sd} routine in
\texttt{R}. More specifically, it computes the square root of an unbiased
estimate of the variance. The upper and lower bound of the shaded areas in all
figures are `mean $+$ SE' and `mean $-$ SE' respectively.

\section{Extra Figures and Tables From the Main Experiment}
\label{sec:extra-figures}
In this section, we provide additional figures and tables to complement the
results presented in the main text.

We first show the validation LPD plotted against the temperature in
\cref{fig:val-lpd} and find that our method can recover temperatures in the
high-performing regions. We further improve the visualization of
\cref{fig:test-tempered-lpd-compare} by separating the test LPD computed with
SM-PD \eqref{eq:smpd} and TM-PD \eqref{eq:tmpd} into \cref{fig:test-lpd} and
\cref{fig:test-tempered-lpd}, respectively. A tabular version of the results is
also provided in \cref{tab:lpd-regression} and \cref{tab:lpd-classification},
with an SGD reference included in all the results.

We also show the performance of point predictions across different temperatures
in \cref{fig:test-pe}, complementing \cref{tab:test-results}. Note that the
point predictions are identical for SM-PD and TM-PD. We observe that predictions
from tempered posteriors generally outperform SGD.

Finally, we show the wall-clock time comparison across the methods in
\cref{tab:time}. Overall, our method is 4 times faster than the grid search for
the smaller regression models and 8 times faster for the larger classification
models.

\begin{table*}[t]
  \caption{Test LPD of the regression models. The values presented are means
    $\pm$ standard errors across five repetitions, with the best value among the
    four methods highlighted in bold. TM-PDs generally have better performance
    for the regression models. Higher LPD indicates better performance.}
  \label{tab:lpd-regression}
  \vskip 0.15in
  \begin{center}
    \begin{small}
      \begin{sc}
        \begin{tabular}{llccc}
          \toprule
                                                           & Method                   & Concrete                    & Energy                     & Naval                      \\
          \midrule
          \multirow{4}{*}{\rotatebox[origin=c]{90}{SM-PD}} & SGD                      & $-3.904 \pm 1.000$          & $1.299 \pm 0.011$          & $1.376 \pm 0.003$          \\
                                                           & $\beta = 1$              & $-0.506 \pm 0.255$          & $1.221 \pm 0.005$          & $1.376 \pm 0.000$          \\
                                                           & $\beta = \hat \beta^{*}$ & $\mathbf{-0.186 \pm 0.049}$ & $1.287 \pm 0.009$          & $\mathbf{1.382 \pm 0.000}$ \\
                                                           & Grid                     & $-0.216 \pm 0.121$          & $\mathbf{1.312 \pm 0.003}$ & $\mathbf{1.382 \pm 0.001}$ \\
          \midrule
          \multirow{4}{*}{\rotatebox[origin=c]{90}{TM-PD}} & SGD                      & $-0.319 \pm 0.076$          & $1.75 \pm 0.054$           & $2.95 \pm 0.162$           \\
                                                           & $\beta = 1$              & $-0.506 \pm 0.255$          & $1.22 \pm 0.005$           & $1.38 \pm 0.000$           \\
                                                           & $\beta = \hat \beta^{*}$ & $-0.13 \pm 0.069$           & $1.78 \pm 0.063$           & $3.16 \pm 0.106$           \\
                                                           & Grid                     & $\mathbf{0.019 \pm 0.035}$  & $\mathbf{1.85 \pm 0.017}$  & $\mathbf{3.71 \pm 0.194}$  \\
          \bottomrule
        \end{tabular}
      \end{sc}
    \end{small}
  \end{center}
  \vskip -0.1in
\end{table*}

\begin{table*}[t]
  \caption{Test LPD of the classification models. The presented values are the
    means $\pm$ standard error across five repetitions, with the best value
    among the four methods boldfaced. SM-PDs tend to perform better on CIFAR-10,
    while SM-PD and TM-PD have similar performance on MNIST. Higher LPD
    indicates better performance.}
  \label{tab:lpd-classification}
  \vskip 0.15in
  \begin{center}
    \begin{small}
      \begin{sc}
        \begin{tabular}{llccc}
          \toprule
                                                           & Method                   & MNIST                       & CIFAR10                     & CIFAR10 (DA)                \\
          \midrule
          \multirow{4}{*}{\rotatebox[origin=c]{90}{SM-PD}} & SGD                      & $-0.076 \pm 0.032$          & $-1.138 \pm 0.306$          & $-1.234 \pm 0.176$          \\
                                                           & $\beta = 1$              & $-0.021 \pm 0.000$          & $-0.333 \pm 0.001$          & $-0.343 \pm 0.002$          \\
                                                           & $\beta = \hat \beta^{*}$ & $\mathbf{-0.018 \pm 0.000}$ & $\mathbf{-0.317 \pm 0.006}$ & $-0.232 \pm 0.007$          \\
                                                           & Grid                     & $\mathbf{-0.018 \pm 0.000}$ & $-0.319 \pm 0.003$          & $\mathbf{-0.229 \pm 0.004}$ \\
          \midrule
          \multirow{4}{*}{\rotatebox[origin=c]{90}{TM-PD}} & SGD                      & $-0.030 \pm 0.003$          & $-0.55 \pm 0.036$           & $-0.284 \pm 0.004$          \\
                                                           & $\beta = 1$              & $-0.021 \pm 0.000$          & $\mathbf{-0.333 \pm 0.001}$ & $-0.343 \pm 0.002$          \\
                                                           & $\beta = \hat \beta^{*}$ & $\mathbf{-0.017 \pm 0.000}$ & $-0.719 \pm 0.443$          & $-0.811 \pm 0.221$          \\
                                                           & Grid                     & $\mathbf{-0.017 \pm 0.001}$ & $\mathbf{-0.333 \pm 0.001}$ & $\mathbf{-0.279 \pm 0.007}$ \\
          \bottomrule
        \end{tabular}
      \end{sc}
    \end{small}
  \end{center}
  \vskip -0.1in
\end{table*}

\begin{figure*}
  \vskip 0.2in
  \begin{center}
    \centerline{\includegraphics[width=\linewidth]{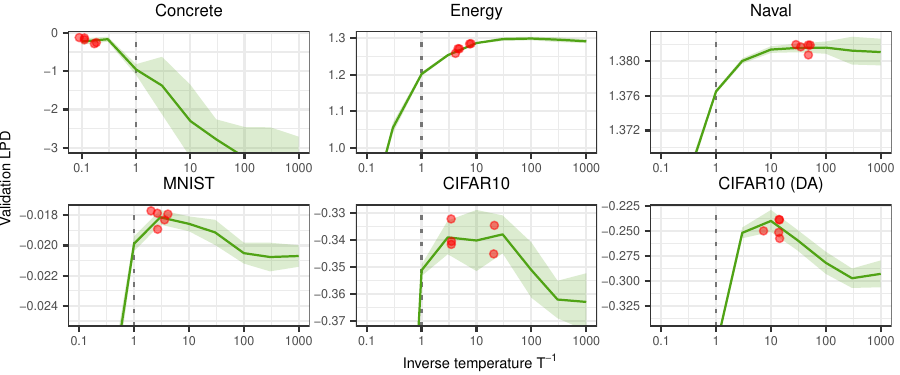}}
    \caption{Validation LPD plotted against inverse temperature $\beta$. This is
      computed with SM-PD as defined in \eqref{eq:smpd}. Solid lines and shaded
      area represent mean $\pm$ standard error across five repetitions. The
      vertical dotted lines indicate the PPD at $\beta = 1$. There are five red
      dots in each plot, each of them corresponding to a repetition of
      $\hat \beta^{*}$ from our method. Higher LPD indicates better
      performance.}
    \label{fig:val-lpd}
  \end{center}
  \vskip -0.2in
\end{figure*}

\begin{figure*}
  \vskip 0.2in
  \begin{center}
    \centerline{\includegraphics[width=\linewidth]{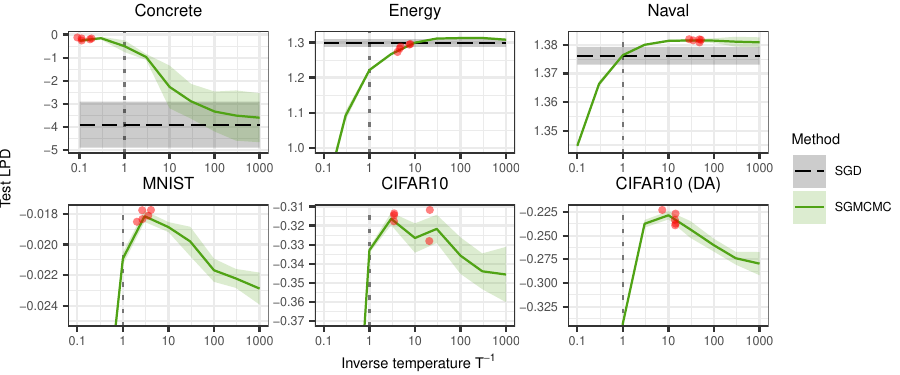}}
    \caption{Test LPD plotted against inverse temperature $\beta$ with SGMCMC
      (green, solid). This is computed with SM-PD as defined in \eqref{eq:smpd}.
      The SGD solution (horizontal, black, dashed) is included as a reference.
      The SGD reference in MNIST and CIFAR10 examples performs considerably
      worse and is out of range. Lines and shaded area represent mean $\pm$
      standard error across five repetitions. The vertical dotted lines indicate
      the PPD at $\beta = 1$. There are five red dots in each plot, each of them
      corresponding to a repetition of $\hat \beta^{*}$ from our method. Higher
      LPD indicates better performance.}
    \label{fig:test-lpd}
  \end{center}
  \vskip -0.2in
\end{figure*}

\begin{figure*}
  \vskip 0.2in
  \begin{center}
    \centerline{\includegraphics[width=\linewidth]{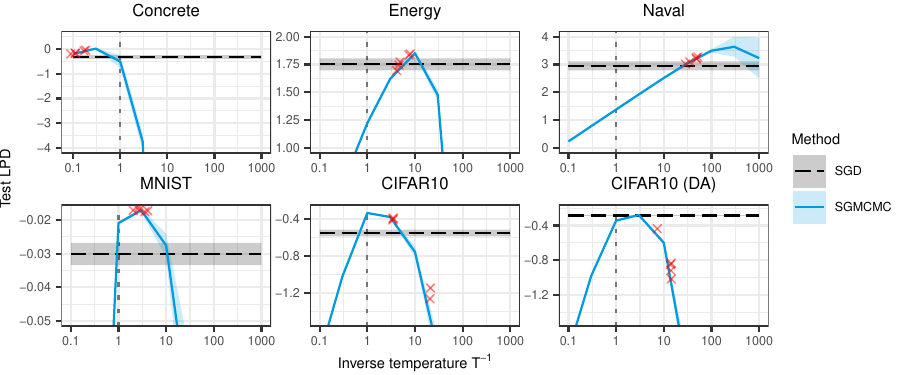}}
    \caption{Test LPD plotted against inverse temperature $\beta$ with SGMCMC
      (blue, solid). This is computed with TM-PD as defined in \eqref{eq:tmpd}.
      The SGD solution (horizontal, black, dashed) is included as a reference.
      Lines and shaded area represent mean $\pm$ standard error across five
      repetitions. The vertical dotted lines indicate the PPD at $\beta = 1$.
      There are five red dots in each plot, each of them corresponding to a
      repetition of $\hat \beta^{*}$ from our method. Higher LPD indicates
      better performance.}
    \label{fig:test-tempered-lpd}
  \end{center}
  \vskip -0.2in
\end{figure*}

\begin{figure*}
  \vskip 0.2in
  \begin{center}
    \centerline{\includegraphics[width=\linewidth]{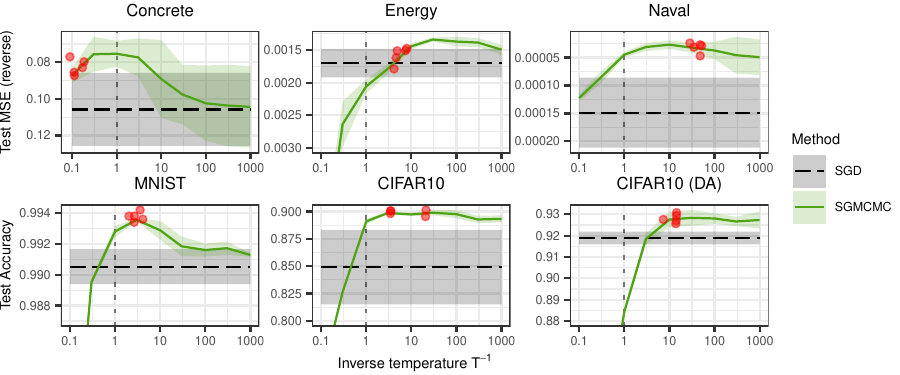}}
    \caption{Test~MSE (Concrete, Energy, Naval) and accuracy (MNIST, CIFAR10) of
      the point predictions of the PPDs, plotted against inverse temperature
      $\beta$. The results for SM-PD \eqref{eq:smpd} and TM-PD \eqref{eq:tmpd}
      are consolidated, as both produce identical point predictions by
      definition. The SGD solution (horizontal, black, dashed) is included as a
      reference. Lines and shaded area represent mean $\pm$ standard error
      across five repetitions. The vertical dotted lines indicate the PPD at
      $\beta = 1$. There are five red dots in each plot, each of them
      corresponding to a repetition of $\hat \beta^{*}$ from our method. Note
      the MSE has been reversed, and higher value indicates better performance
      in all plots.}
    \label{fig:test-pe}
  \end{center}
  \vskip -0.2in
\end{figure*}

\begin{table*}[t]
  \caption{Wall-clock time comparison across methods. Times are averaged over
    five repetitions.}
  \label{tab:time}
  \vskip 0.15in
  \begin{center}
    \begin{small}
      \begin{sc}
        \begin{tabular}{lcccccc}
          \toprule
          \multirow{2}{*}{Method}         & \multicolumn{6}{c}{Wall-clock time (minutes)}                                                    \\
          \cmidrule(lr){2-7}
                                          & Concrete                                      & Energy & Naval & MNIST  & CIFAR10 & CIFAR10 (DA) \\
          \midrule
          SGD                             & 0.17                                          & 0.16   & 0.78  & 1.93   & 27.90   & 34.84        \\
          $\beta = 1$                     & 0.20                                          & 0.19   & 1.40  & 18.83  & 260.52  & 327.12       \\
          $\beta = \hat \beta^{*}$ (Ours) & 0.37                                          & 0.36   & 2.19  & 20.77  & 288.42  & 361.96       \\
          Grid                            & 1.76                                          & 1.74   & 12.64 & 169.51 & 2344.15 & 2944.01      \\
          \bottomrule
        \end{tabular}
      \end{sc}
    \end{small}
  \end{center}
  \vskip -0.1in
\end{table*}

\section{The Decision-Theoretic Generalized Bayesian Framework}
\label{sec:bissiri-primer}
The classical Bayesian rule~\eqref{eq:standard-rule} updating prior belief
$p(\param)$ to the posterior belief $p(\param | \train)$ arises from Bayes'
Theorem. An implicit assumption in this procedure for optimal performance is
that the model is correctly specified: there exists some $\theta_0$ such that
$p(y \vert x, \theta_0) = q(y|x)$ for all $(x,y)$. The adage that ``all models
are wrong'' \citep{box76science} highlights that this assumption should not be
expected to hold. Consequently, it does not make sense to infer on a parameter
$\param$ that lacks a connection to the true data-generation distribution.

Using a decision-theoretic argument, \citet{bissiri16general} offers an
alternative justification for using~\eqref{eq:standard-rule}, even when the
model is misspecified. Suppose that we are interested in a quantity $\param_{0}$
that minimizes the expected loss under the truth~$q(x, y)$, that is,
\begin{equation}
  \label{eq:argmin-loss}
  \param_{0} \coloneq \argmin_{\param} \int \ell(\param, x, y) q(x, y) \, \textrm{d}y \, \textrm{d}x
\end{equation}
for some loss function $\ell$. Then, we would like to derive an update rule
$\psi$ that takes an observation $(x_{1}, y_{1})$ and updates our prior belief
on $\param_{0}$ to a posterior belief,
\begin{equation*}
  p(\param | x_{1}, y_{1}) = \psi \{ \ell(\param, x_{1}, y_{1}), p(\param) \}.
\end{equation*}
Furthermore, the update rule should be \emph{coherent}. That is, suppose that we
would like to update our posterior with two data points
$\{ (x_{1}, y_{1}), (x_{2}, y_{2}) \}$, the update rule should satisfy
\begin{equation*}
  \psi \{ \ell(\param, x_{2}, y_{2}), \psi \{ \ell(\param, x_{1}, y_{1}), p(\param) \} \}
  \equiv \psi \{ \ell(\param, x_{2}, y_{2}) + \ell(\param, x_{1}, y_{1}), p(\param) \}.
\end{equation*}
The coherent property will ensure that the same posterior arises regardless of
the order which the data are processed. \citet{bissiri16general} shows that all
coherent update rules take the form
\begin{equation}
  \label{eq:general-rule}
  p_{\operatorname{GB}}(\param | \train) \propto p(\param)
  \exp \left( -\sum_{(x,y) \in \train} \ell(\param, x, y) \right) ,
\end{equation}
which we will refer to as the \textit{general Bayesian update}, and the
associated posterior $p_{\operatorname{GB}}(\param | \train)$ as the
\textit{generalized Bayes posterior}, which has been widely analyzed
\citep{zhang06epsilon, zhang06informationtheoretic, jiang08gibbs,
  bhattacharya19bayesian, alquier16properties, alquier20concentration}. Note
that we must take care to have a finite normalizing constant to
\eqref{eq:general-rule},
\begin{equation*}
  0 < \int \exp \left( - \sum_{(x,y)\in \train }\ell(\param, x, y) \right) p(\param) \textrm{d}\param < +\infty.
\end{equation*}
Under certain regularity conditions, the generalized Bayes posterior will
concentrate on $\param_{0}$ as \mbox{$\ntrain \to \infty$}; a proof can be found
in \citet[Lemma~5]{mclatchie24predictive}. The key advantage of this framework
in lies in its ability to directly infer the quantity of interest
(i.e.,~$\param_{0}$) using a loss function, without necessitating any
probabilistic assumptions that relate~$\param_{0}$ to the data. For formal
proofs, we direct interested readers to \citep[Section 1]{bissiri16general}.

\subsection{Calibrating Information Gain}
The tempered posterior, as defined in~\eqref{eq:temper-posterior}, is a special
parameterisation of the generalized Bayes posterior achieved by setting the loss
to be $-\log p(y | x, \param)$ and scaling it by a factor of $\beta > 0$,
i.e.~$\ell(\param, x, y) = -\beta \log p(y | x, \param)$, and concentrating the
prior to an equivalent extent. Therefore, the tempered posterior has an
interpretation of a coherent update rule for the KL minimizer
$\param_{\dag} = \argmin_{\param} -\beta \log p(y | x, \param) = \argmin_{\param} \KL(q(y | x) \| p(y | x, \param))$,
as defined in the main text. From this perspective, the temperature has the role
of controlling the amount of information being `added' to the prior in each
update step. \citet{holmes17assigning} proposed setting $\beta$ such that the
\textit{expected information gain} of our setup (LHS
of~\eqref{eq:expected-information-gain}) matches that of a hypothetical
experiment (RHS of~\eqref{eq:expected-information-gain}),
\begin{equation}
  \label{eq:expected-information-gain}
  \int D(p_{\beta}(\param | \mathcal{D}), p(\param)) q(x, y) \mathrm{d}y \mathrm{d}x
  = \int D(p(\param | \mathcal{D}), p(\param)) p(y | x, \param_\dag) q(x) \mathrm{d}y \mathrm{d}x
\end{equation}
where $D$ is a divergence that measures the information gain from a prior to a
posterior, and $p_{\beta}$ is a tempered posterior (with likelihood tempering
only).

On the LHS of~\eqref{eq:expected-information-gain}, we are measuring the
expected information gain in our existing setup, i.e.,~$q(x, y)$ is unknown,
from a prior to a tempered posterior $p_{\beta}$. Note that this integral is a
function of $\beta$. Then, \citet{holmes17assigning} proposed picking a $\beta$
that matches up the expected information gain from a well-specified experiment
that targets the same quantity of interest $\param_{\dag}$, i.e.,~having the
truth be $p(\cdot | x, \param_\dag) q(x)$. In the well-specified setting, the
optimal $\beta$ is 1, and thus the information gain is computed from the prior
to a posterior at $\beta = 1$. This proposed method is based on the principle
that the expected information gain should match across posteriors that are
targeting the same quantity of interest. In their follow up work,
\citet{lyddon19general} proposed selecting $\beta$ such that the asymptotic
Fisher information number of the generalized Bayes posterior to match to that
derived from the ``loss-likelihood'' bootstrap method, which is a sampling
method for general Bayes posteriors.

\subsection{Data Augmentation in Generalized Bayes}
\label{sec:da-generalized-bayes}

Accounting for data augmentation in BDL is non-trivial, as the data are no
longer i.i.d. This has led to the development of several techniques to address
this issue \citep{nabarro22data,kapoor22uncertainty}. However, data augmentation
can be easily incorporated into decision-theoretic GB \citep{bissiri16general},
as the generalized Bayes posterior is derived from iterative belief updates
rather than relying on the independence assumption. We can then regard the
posterior as reflecting our belief in the KL minimizer from the `augmented'
truth to our model.

For example, in the ResNet20-CIFAR10 example with SGMCMC in \citet{wenzel20how},
the data was augmented in each epoch, and we effectively have $M \ntrain$
(instead of $\ntrain$) number of observations in the training set
$\tilde \train = \{ (\tilde x_{i}, y_{i}) \}^{M\ntrain}_{i=1}$, where $M$ is the
number of epochs, and $\tilde x_{i}$ represents an augmented image. Then,
setting $\ell(x, y, \param) = - \log p(y | x, \param)$ and working `backward',
the posterior can be regarded as representing our belief in a new quantity
$\tilde \param_{\dag}$. This quantity is the minimizer of the KL divergence from
an `augmented' truth $\tilde q(x, y) = \tilde q(y | x) \tilde q(x)$, from which
the augmented data have arisen, to the model $p(y | x, \param)$
\begin{equation*}
  \tilde \param_{\dag} = \argmin_{\param} \E_{\tilde q(x)} \KL(\tilde q(y | x) \| p(y | x, \param)) \, .
\end{equation*}

\end{document}